\newtheorem{thm}{Theorem}
\newtheorem{lemma}{Lemma}
\newtheorem{cor}{Corollary}
\newtheorem{definition}{Definition}
\newcommand{\myparagraph}[1]{\textbf{#1}}
\title{A Manifold View of Adversarial Risk}
\author[1]{Wenjia Zhang}
\author[2]{Yikai Zhang}
\author[3]{Xiaoling Hu}
\author[4]{Mayank Goswami}
\author[5]{Chao Chen}
\author[1]{Dimitris Metaxas}
\affil[1]{Department of Computer Science, Rutgers University} 
\affil[2]{Morgan Stanley} 
\affil[3]{Department of Computer Science, Stony Brook University} 
\affil[4]{Department of Computer Science, Queens College of CUNY}
\affil[5]{Department of Biomedical Informatics, Stony Brook University}
\begin{document}
\maketitle

\begin{abstract}
  The adversarial risk of a machine learning model has been widely studied. Most previous works assume that the data lies in the whole ambient space. We propose to take a new angle and take the manifold assumption into consideration. Assuming data lies in a manifold, we investigate two new types of adversarial risk, the normal adversarial risk due to perturbation along normal direction, and the in-manifold adversarial risk due to perturbation within the manifold. We prove that the classic adversarial risk can be bounded from both sides using the normal and in-manifold adversarial risks. We also show with a surprisingly pessimistic case that the standard adversarial risk can be nonzero even when both normal and in-manifold risks are zero. We finalize the paper with empirical studies supporting our theoretical results. Our results suggest the possibility of improving the robustness of a classifier by only focusing on the normal adversarial risk. 
\end{abstract}

\section{Introduction}

Machine learning (ML) algorithms have achieved astounding success in multiple domains such as computer vision~\citep{krizhevsky2012imagenet,he2016deep}, natural language processing~\citep{wu2016google,vaswani2017attention}, and robotics~\citep{levine2014learning,nagabandi2018neural}. These models perform well on massive datasets but are also vulnerable to small perturbations on the input examples. Adding a slight and visually unrecognizable perturbation to an input image can completely change the model's prediction. Many works have been published focusing on such adversarial attacks~\citep{szegedy2013intriguing,carlini2017towards,madry2017towards}. To improve the robustness of these models, various defense methods have been proposed~\citep{madry2017towards,zhang2019theoretically,shafahi2019adversarial}. These methods mostly focus on minimizing the \emph{adversarial risk}, i.e., the risk of a classifier when an adversary is allowed to perturb any data with an oracle. 

Despite the progress in improving the robustness of models, it has been observed that compared with a standard classifier, a robust classifier often has a lower accuracy on the original data. The accuracy of a model can be compromised when one optimizes its adversarial risk. This phenomenon is called \emph{the trade-off between robustness and accuracy}. \cite{su2018robustness} observed this trade-off effect on a large number of commonly used model architectures. They concluded that there is a linear negative correlation between the logarithm of accuracy and adversarial risk. \cite{tsipras2018robustness} proved that adversarial risk is inevitable for any classifier with a non-zero error rate. \cite{zhang2019theoretically} decomposed the adversarial risk into the summation of standard error and boundary error. The decomposition provides the opportunity to explicitly control the trade-off. They also proposed a regularizer to balance the trade-off by maximizing the boundary margin.

In this paper, we investigate the adversarial risk and the robustness-accuracy trade-off through a new angle. We follow the classic manifold assumption, i.e., data are living in a low dimensional manifold embedded in the input space \citep{rifai2011manifold,cayton2005algorithms,narayanan2010sample,niyogi2008finding}. 

Based on this assumption, we analyze the adversarial risk with regard to adversarial perturbations within the manifold and normal to the manifold. By restricting to in-manifold and normal perturbations, we define the \emph{in-manifold adversarial risk} and \emph{normal adversarial risk}. Using these new risks, together with the standard risk, we prove an upper bound and a lower bound for the adversarial risk. We also show that the bound is tight by constructing a pessimistic case. We validate our theoretical results using synthetic experiments.

Our study sheds light on a new aspect of the robustness-accuracy trade-off. Through the decomposition into in-manifold and normal adversarial risks, we might find an extra margin to exploit without confronting the trade-off. 
Future work will include developing normal adversarial training algorithms for real-world datasets.

\subsection{Related Works}
\myparagraph{Robustness-accuracy Trade-off}
There are several works studying the trade-off between robustness and accuracy~\citep{tsipras2018robustness,su2018robustness,zhang2019theoretically,dohmatob2019generalized}. The basic question is whether the trade-off actually exists. i.e. is there a classifier that is both accurate and robust? Empirical and theoretical proofs showed that actual trade-off does exist even in the infinite data limit~\citep{tsipras2018robustness,su2018robustness,zhang2019theoretically}. \cite{dohmatob2019generalized} showed that a high accuracy model can inevitably be fooled by the adversarial attack. \cite{zhang2019theoretically} gave examples showing that the Bayes optimal classifier may not be robust. 

However, some works have different views on this trade-off or even its existence. In contrast to the idea that the trade-off is unavoidable, these works argued that a lack of sufficient optimization methods~\citep{awasthi2019robustness,rice2020overfitting,shaham2018understanding} or better network architecture~\citep{guo2020meets,fawzi2018adversarial} causes the drop in accuracy, instead of the increase in robustness.  \cite{yang2020closer} showed the existence of both robust and accurate classifiers and argued that the trade-off is influenced by the training algorithm to optimize the model. They investigated distributionally separated dataset and claimed that the gap between robustness and accuracy arises from the lack of a training method that imposes local Lipschitzness on the classifier. Remarkably, in~\citep{gowal2020uncovering,raghunathan2020understanding,carmon2019unlabeled}, it was shown that with certain augmentation of the dataset, one may be able to obtain a model that is both accurate and robust.


\myparagraph{Manifold Assumption}
One important line of research focuses on the manifold assumption on the data distribution. This assumption suggests that observed data is distributed on a low dimensional manifold~\citep{rifai2011manifold,cayton2005algorithms,narayanan2010sample} and there exists a mapping that embeds the low dimension manifold in some higher dimension space. Traditional manifold learning methods~\citep{tenenbaum2000global,saul2003think} try to recover the embedding by assuming the mapping preserves certain properties like distances or local angles. Following this assumption, on the topic of robustness, \cite{tanay2016boundary} showed the existence of adversarial attack on the flat manifold with linear classification boundary. It was proved later in ~\cite{gilmer2018adversarial} that in-manifold adversarial examples exist. They stated that high dimension data is highly sensitive to $l_2$ perturbations and pointed out the nature of adversarial is the issue with potential decision boundary. Later, \cite{stutz2019disentangling} showed that with the manifold assumption, regular robustness is correlated with in-manifold adversarial examples, and therefore, accuracy and robustness may not be contradictory goals. Further discussion~\citep{xie2020adversarial} even suggested that adding adversarial examples in the training process can improve the accuracy of the model.
\cite{lin2020dual} used perturbation within a latent space to approximate in-manifold perturbation. To the best of our knowledge, no existing work discussed normal perturbation and normal adversarial risk as we do. We are also unaware of any theoretical results proving upper/lower bounds for adversarial risk in the manifold setting.

We also note a classic manifold reconstruction problem, i.e., reconstructing a $d$-dimensional manifold given a set of points sampled from the manifold. A large group of classical algorithms~\citep{edelsbrunner1994triangulating,dey2006provable,niyogi2008finding} are provably good, i.e., they give a guarantee of reproducing the manifold topology with a sufficiently large number of sample points.

\section{MANIFOLD BASED RISK DECOMPOSITION}

In this section, we state our main theoretical result~\ref{riskdecom}, which decomposes the adversarial risk into appropriately defined normal and in-manifold or tangential risks. We first define these quantities and set up basic notation, with the main theorem following in Section~2.3. For the sake of simplicity, we describe our main theorem in the setting of binary $\{-1,1\}$ labels.

\subsection{Data Manifold}\label{manifold}

Let $(\mathbb{R}^D, \vert\vert.\vert\vert)$ denote the $D$ dimensional Euclidean space with $\ell_{2}$-norm. For $x \in \mathbb{R}^{D}$, $B_{\epsilon}(x)$ be the open ball of radius $r$ in $\mathbb{R}^D$ with center at $x$. For a set $A \subset \mathbb{R}^{D}$, define $B_{\epsilon}(A)=\{y: \exists x\in A, d(x,y) < \epsilon\}$.

Let $\mathcal{M} \subset \mathbb{R}^D$ be a $d$-dimensional compact smooth manifold embedded in $\mathbb{R}^D$. Thus for any $x \in M$ there is a corresponding coordinate chart $(U, g)$ where $U\ni x$ is a open set of $\mathcal{M}$ and $g$ is a homeomorphism from $U$ to a subset of $\mathbb{R}^d$. 
For $x\in \mathcal{M}$, we let $T_{x}\mathcal{M}$ and $N_{x}\mathcal{M}$ denote the tangent and normal spaces at $x$. Intuitively, the tangent space $T_{x}\mathcal{M}$ is the space of tangent directions, or equivalence classes of curves in $\mathcal{M}$ passing through $x$, with two curves considered equivalent if they are tangent at $x$. The normal space $N_{x}\mathcal{M}$ is the set of vectors in $\mathbb{R}^{D}$ that are orthogonal to any vector in $T_{x}\mathcal{M}$. Since $\mathcal{M}$ is a smooth $d$-manifold, $T_{x}\mathcal{M}$ and $N_{x}\mathcal{M}$ are $d$ and $D-d$ dimensional vector spaces, respectively. See Figure~\ref{fig:smooth_manifold}. For detailed definitions, we refer the reader to \cite{bredon2013topology}.

We assume that the data and (binary) label pairs are drawn from $\mathcal{M} \times \{-1,1\}$  according to some unknown distribution $p(x, y)$. Note that $\mathcal{M}$ is unknown. A score function $f(x)$ is a continuous function from $\mathbb{R}^{D}$ to $[0,1]$. We denote by $\mathds{1}(A)$ the indicator function of the event $A$ that is $1$ if $A$ occurs and $0$ if $A$ does not occur, and will use it to represent the 0-1 loss.

\begin{figure}
\vspace{-.2in}
    \centering
    \includegraphics[width =0.55\textwidth]{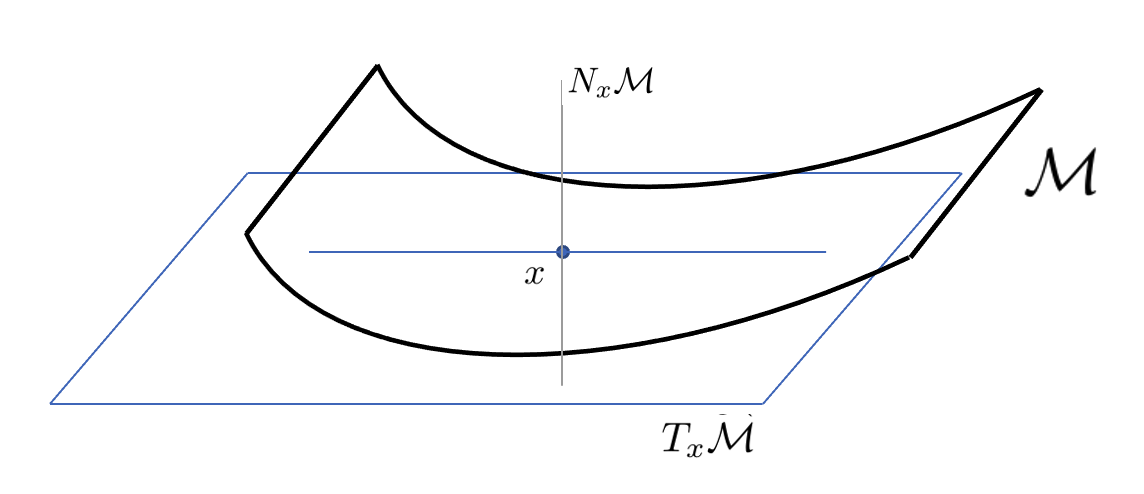}
\vspace{-.2in}
    \caption{Tangential and Normal Space}
\vspace{-.2in}
    \label{fig:smooth_manifold}
\end{figure}

\subsection{Robustness and Risk}\label{riskdef}

Given data from $\mathcal{M} \times \{-1,1\}$ drawn according to $p$ and a classifier $f$ on $\mathbb{R}^{D}$, we define three types of risks. The first, adversarial risk, has been extensively studied in machine learning literature:


\begin{definition}[Adversarial Risk]
Given $\epsilon>0$, define the adversarial risk of classifier $f$ with budget $\epsilon$ to be
\[
R_{adv}(f, \epsilon) := \displaystyle\mathop{\mathbb{E}}_{(x, y)\sim p}\mathds{1}(\exists x^{\prime}\in B_{\epsilon}(x): f(x')y \leq 0)
\]
\end{definition}

Notice that $B_{\epsilon}(x)$ is the open ball around $x$ in $\mathbb{R}^{D}$ (the ambient space).

We next define risk that is concerned only with in-manifold perturbations. Previously, \cite{gilmer2018adversarial} and \cite{stutz2019disentangling} showed that there exist in-manifold adversarial examples, and empirically demonstrated that in-manifold perturbations are a cause of the standard classification error. Therefore, in the following, we define the in-manifold perturbations and in-manifold adversarial risk.

\begin{definition}[In-manifold Risk]
Given $\epsilon>0$, the in-manifold adversarial perturbation for classifier $f$ with budget $\epsilon$ is the set
\[
B_{\epsilon}^{in}(x) := \{x^{\prime}\in \mathcal{M}: \|x-x^{\prime}\|\leq \epsilon\}
\]
The in-manifold adversarial risk is 
\[
R_{adv}^{in}(f, \epsilon) := \displaystyle\mathop{\mathbb{E}}_{(x, y)\sim p}\mathds{1}(\exists x^{\prime}\in B^{in}_{\epsilon}(x): f(x')y \leq 0)
\]
\end{definition}

We remark that while the above perturbation is on the manifold, in many manifold-based defense algorithms use generative models to estimate the homeomorphism (the manifold chart) $z=g(x)$ for real-world data.  Therefore, instead of in-manifold perturbation, one can also use an equivalent $\eta$-budget perturbation in the \textit{latent} space. However, for our purposes, the in-manifold definition will be more convenient to use. Lastly, we define the \textit{normal} risk:

\begin{definition}[Normal Adversarial Risk]
Given $\epsilon>0$, the normal adversarial perturbation for classifier $f$ with budget $\epsilon$ is be the set
\[
B^{nor}_{\epsilon}(x):=\{x': x'-x\in N_{x}\mathcal{M},  | \|x-x^{\prime}\|\leq \epsilon\}
\] 
Define the normal adversarial risk as
\[
R_{adv}^{nor}(f, \epsilon) := \displaystyle\mathop{\mathbb{E}}_{(x, y)\sim p}\mathds{1}(\exists x^{\prime}\neq x \in B^{nor}_{\epsilon}(x): f(x')y \leq 0)
\]
\end{definition}

Notice that the normal adversarial risk is non-zero if there is an adversarial perturbation $x' \neq x$ in the normal direction at $x$. Finally, we have the usual \emph{standard risk}:
$R_{std}(f) := \displaystyle\mathop{\mathbb{E}}\nolimits_{(x, y)\sim p}\mathds{1}( f(x)y \leq 0)$.


\subsection{Main  Result: Decomposition of Risk}

In this section, we state our main result that decomposes the adversarial risk into its tangential and normal components. Our theorem will require a mild assumption on the decision boundary $DB(f)$ of the classifier $f$, i.e., the set of points $x$ where $f(x)=0$.

\noindent\textbf{Assumption [A]:} For all $x \in DB(f)$ and all neighborhoods $U \ni x$ containing $x$, there exist points $x_{0}$ and $x_{1}$ in $U$ such that $f(x_{0}) < 0$ and $f(x_{1}) > 0$.

This assumption states that a point that is difficult to classify by $f$ has points of both labels in any given neighborhood around it. In particular, this means that the decision boundary does not contain an open set. We remark that both Assumption A and the continuity requirement for the score function $f$ are implicit in previous decomposition results like Equation 1 in \cite{zhang2019theoretically}. Without Assumption A, the ``neighborhood'' of the decision boundary in \cite{zhang2019theoretically} will not contain the decision boundary, and it is easy to give a counterexample to Equation 1 in \cite{zhang2019theoretically} if $f$ if not continuous.

Our decomposition result will decompose the adversarial risk into the normal and tangential directions: however, as we will show, an ``extra term'' appears, which we define next:

\begin{definition}[NNR Nearby-Normal-Risk]
Fix $\epsilon>0$. Denote by $A(x,y)$ the event that $\forall x' \in B^{nor}_{\epsilon}(x), f(x')y >0,$ i.e., the normal adversarial risk of $x$ is zero. 

Denote by $B(x,y)$ the event that \[\exists x' \in B^{in}_{2\epsilon}(x): (\exists z \in B^{nor}_{\epsilon}(x'): f(z)f(x') \leq 0),\] i.e., $x$ has a point $x'$ near it such that $x'$ has non-zero normal adversarial risk. 

Denote by $C(x,y)$ the event $\forall x' \in B^{in}_{2\epsilon}(x), f(x')y >0$, i.e., $x$ has no adversarial perturbation in the manifold within distance $2 \epsilon$.

The Nearby-Normal-Risk (denoted as NNR) of $f$ with budget $\epsilon$ is defined to be \[\displaystyle\mathop{\mathbb{E}}_{(x, y)\sim p} \mathds{1}(A(x,y) \wedge B(x,y) \wedge C(x,y)),\] where $\wedge$ denotes ``and''.
\end{definition}

We are now in a position to state our main result.

\begin{thm}\label{riskdecom}[Risk Decomposition]
Let $\mathcal{M}$ be a smooth compact manifold in $\mathbb{R}^{D}$, and let data be drawn from $\mathcal{M} \times \{-1,1\}$ according to some distribution $p$. There exists a $\Delta>0$ depending only on $\mathcal{M}$ such that the following statements hold for any $\epsilon< \Delta$. For any score function $f$ satisfying assumption A,

\begin{enumerate}[(i)]
    \item  \begin{eqnarray}\label{maineq1}
    R_{adv}(f,\epsilon) &\leq& R_{std}(f) + R^{nor}_{adv}(f,\epsilon) + R^{in}_{adv}(f,2\epsilon)\notag \\  
    &+&  \text{NNR}(f,\epsilon).
    \end{eqnarray}

    \item If $R_{adv}^{nor}(f,\epsilon)=0$, then \[R_{adv}(f,\epsilon) \leq R_{std}(f)+  R_{adv}^{in}(f,2\epsilon)\]
\end{enumerate}

\end{thm}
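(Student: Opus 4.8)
The plan is to prove (i) by a pointwise event‑inclusion argument built on the nearest‑point projection onto $\mathcal{M}$, and to read off (ii) as the specialization of (i) in which the NNR term is shown to vanish. First I would set $\Delta$ equal to a fixed fraction of the reach $\tau$ of $\mathcal{M}$; compactness and smoothness guarantee $\tau>0$, and for every point $w$ with $d(w,\mathcal{M})<\tau$ the nearest‑point projection $\pi(w)\in\mathcal{M}$ is uniquely defined and satisfies $w-\pi(w)\in N_{\pi(w)}\mathcal{M}$. This projection is the only piece of differential geometry the argument needs.

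For part (i), I would prove the following inclusion of events in the random pair $(x,y)$:
\[
\{\exists\, x'\in B_\epsilon(x):\ f(x')y\le 0\}\ \subseteq\ E_{std}\cup E_{nor}\cup E_{in}\cup E_{NNR},
\]
where $E_{std}=\{f(x)y\le 0\}$, $E_{nor}=\{\exists\, x'\ne x\in B^{nor}_\epsilon(x):f(x')y\le 0\}$, $E_{in}=\{\exists\, x'\in B^{in}_{2\epsilon}(x):f(x')y\le 0\}$, and $E_{NNR}=\{A\wedge B\wedge C\}$; taking expectations and a union bound then yields \eqref{maineq1}. To prove the inclusion, fix $(x,y)$ in the left‑hand event and outside $E_{std}\cup E_{nor}\cup E_{in}$; I must check $A\wedge B\wedge C$. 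Lying outside $E_{std}$ and $E_{nor}$ is precisely the event $A$ (the instance $x'=x$ gives $f(x)y>0$, the instances $x'\ne x$ give the rest), and lying outside $E_{in}$ is precisely $C$. For $B$: pick $x'\in B_\epsilon(x)$ witnessing the left‑hand event. If $x'\in\mathcal{M}$ then $x'\in B^{in}_{2\epsilon}(x)$ would put $(x,y)$ in $E_{in}$, a contradiction; hence $x'\notin\mathcal{M}$ and $\tilde x:=\pi(x')\ne x'$. Since $x\in\mathcal{M}$, $\|x'-\tilde x\|\le\|x'-x\|<\epsilon$, so $\|\tilde x-x\|<2\epsilon$ and $\tilde x\in B^{in}_{2\epsilon}(x)$, while $x'-\tilde x\in N_{\tilde x}\mathcal{M}$ gives $x'\in B^{nor}_\epsilon(\tilde x)$. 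Because $(x,y)\notin E_{in}$ and $\tilde x\in B^{in}_{2\epsilon}(x)$, we have $f(\tilde x)y>0$; together with $f(x')y\le 0$ and $y^2=1$ this gives $f(x')f(\tilde x)=(f(x')y)(f(\tilde x)y)\le 0$. Thus $\tilde x$ and $z:=x'$ witness $B$, finishing (i).

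For part (ii), plugging $R^{nor}_{adv}(f,\epsilon)=0$ into \eqref{maineq1} leaves $R_{adv}(f,\epsilon)\le R_{std}(f)+R^{in}_{adv}(f,2\epsilon)+\text{NNR}(f,\epsilon)$, so it suffices to show $\text{NNR}(f,\epsilon)=0$, i.e.\ that $A\wedge B\wedge C$ occurs only on a $p$‑null set. Unpacking $B\wedge C$ as above, on this event there exist $\tilde x\in B^{in}_{2\epsilon}(x)$ and $z\in B^{nor}_\epsilon(\tilde x)$ with $f(z)f(\tilde x)\le 0$; then $C$ forces $f(\tilde x)y>0$, hence $f(\tilde x)\ne 0$, hence $f(z)y\le 0$ and $z\ne\tilde x$ — that is, $\tilde x$ carries a bona fide normal‑direction sign change of $f$ within budget $\epsilon$. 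The plan is to argue that such a sign change at $\tilde x$ is incompatible with $R^{nor}_{adv}(f,\epsilon)=0$, using continuity of $f$ and Assumption A to tie the sign change at $\tilde x$ to the behaviour of $f$ on normal balls of nearby points, and the positive reach to keep all projections well defined.

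I expect this last step to be the main obstacle, and it is genuinely delicate: the hypothesis $R^{nor}_{adv}(f,\epsilon)=0$ only controls $f$ on normal balls around points in the support of $p$, whereas the point $\tilde x$ handed to us by the NNR event is merely some point of $\mathcal{M}$ within $2\epsilon$ of $x$, with no guarantee that it lies in the support or carries the label $y$. Bridging this gap — propagating a normal sign change at $\tilde x$, via continuity of $f$ and Assumption A (and, if needed, a full‑support assumption on the data manifold), to a positive‑measure family of genuinely bad data points — is where essentially all the remaining work lies; everything else is bookkeeping on top of the single fact that nearest‑point projection onto a positive‑reach manifold moves points along the normal direction.
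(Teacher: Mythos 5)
Your argument for part (i) is correct and is essentially the paper's own proof: you use the tubular neighborhood (positive reach) to get the unique normal projection $\pi$, split the adversarial event into the standard, normal, in-manifold (budget $2\epsilon$), and remaining cases, and check that the remaining case, with $\tilde x=\pi(x')$, forces the events $A$, $B$, $C$ of the NNR definition. The paper does the same thing with four disjoint events $E_1,\dots,E_4$ and an indicator identity instead of your inclusion plus union bound; the difference is cosmetic.

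Part (ii), however, is not proved: you reduce it to showing $\mathrm{NNR}(f,\epsilon)=0$ and then explicitly defer ``the main obstacle,'' which is the entire content of the statement, so as it stands the proposal has a genuine gap. The paper closes it as follows. Suppose the NNR event holds at $(x,y)$, with witnesses $\tilde x\in B^{in}_{2\epsilon}(x)$ and $x'\in B^{nor}_{\epsilon}(\tilde x)$, $f(x')f(\tilde x)\le 0$. The key dichotomy is: either $f(\tilde x)=0$, or $f(\tilde x)\neq 0$. In the second case (after using Assumption A to move $x'$ slightly if $f(x')=0$), continuity of $f$ gives open sets $U\ni\tilde x$ and $V\ni x'$ on which $f$ has constant, opposite signs; letting $U'$ be the union of the normal segments over $U$ inside the tubular neighborhood and $W=\pi(U'\cap V)$, one obtains a nonempty open subset $W\subset\mathcal{M}$ every point $w$ of which has a normal adversarial perturbation within budget $\epsilon$ \emph{whatever its label}: if the label agrees in sign with $f(w)$, the point of $U'\cap V$ on the normal segment over $w$ is an adversarial normal perturbation, and if not, points on the normal segment arbitrarily close to $w$ already are. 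A positive-measure set of points with nonzero normal adversarial risk contradicts $R^{nor}_{adv}(f,\epsilon)=0$, so in fact $f(\tilde x)=0$; then Assumption A at $\tilde x$ produces points arbitrarily close to $\tilde x$, hence within $2\epsilon$ of $x$, where $fy\le 0$, contradicting event $C$. Hence the NNR event cannot occur (off a null set), which is exactly the step you left open. Note that this label-independence argument resolves the part of your worry about $\tilde x$ not carrying the label $y$; your other concern, that $R^{nor}_{adv}=0$ controls only the support of $p$ while $W$ is merely open in $\mathcal{M}$, is a real subtlety, but the paper handles it only implicitly (it measures $W$ by $\mu$, effectively assuming positive-volume open subsets of $\mathcal{M}$ have positive probability, as in Corollary 1), so it is not the missing ingredient—the $f(\tilde x)=0$ dichotomy and the open-set construction are.
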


\noindent\textbf{Remark:} 

\begin{enumerate}
    \item The first result decomposes the adversarial risk into the standard risk, the normal adversarial risk, the in-manifold risk, and an ``extra term'' --- the Nearby-Normal-Risk. The NNR comes into play when a point $x$ doesn't have normal adversarial risk, and the score function on all points nearby agrees with $y(x)$, yet there is a point near $x$ that has non-zero normal adversarial risk.

\item The second result states that if the normal adversarial risk is zero, then the $\epsilon$-adversarial risk is bounded by the sum of the standard risk and the $2\epsilon$ in-manifold risk.

\end{enumerate}
One may wonder if a decomposition of the form  $R_{adv}(f,\epsilon) \leq  R_{std}(f)+ R^{nor}_{adv}(f,\epsilon)+ R^{in}_{adv}(f,2\epsilon)$ is possible. We prove that this is not possible.

\begin{thm}\label{tight}[Tightness of Decomposition Result]

    For any $\epsilon< 1/2$, there exists a sequence $\{f_{n}\}_{n=1}^{\infty}$ of continuous score functions such that 
    \begin{enumerate}
        \item $R_{std}(f) = 0$ for all $n \geq 1$,
        \item $R^{in}_{adv}(f_n,2\epsilon) = 0$ for all $n \geq 1$, and
        \item $R^{nor}_{adv}(f_n,\epsilon) \rightarrow 0$ as $n$ goes to infinity,
    \end{enumerate} 
    but $R_{adv}(f,\epsilon) = 1$ for all $n > \frac{1}{\sqrt{3}\epsilon}$.

\end{thm}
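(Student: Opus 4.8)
The plan is to write down an explicit family and verify the three vanishing properties and the persistence of full adversarial risk directly. Take $\mathcal{M}\subset\mathbb{R}^{2}$ to be a flat $1$‑dimensional manifold of unit length; for concreteness the segment $[0,1]\times\{0\}$ (to avoid a boundary one may instead use a smooth closed curve of unit length — every estimate below is local and unchanged), so that the normal space at $(t,0)$ is the vertical line $\{(t,s):s\in\mathbb{R}\}$. Let $p$ be the uniform distribution on $\mathcal{M}$ with every point labelled $y=+1$. For each $n$ fix a radius $\delta_{n}$ with $0<\delta_{n}<\epsilon/2$ and $n\delta_{n}\to 0$ (say $\delta_{n}=\epsilon/(4n^{2})$), put the ``bad'' centers $p_{i}=\bigl(\tfrac{2i-1}{2n},\tfrac{\epsilon}{2}\bigr)$ for $i=1,\dots,n$ at height exactly $\epsilon/2$ above $\mathcal{M}$, and define the score function $f_{n}(x)=\min\{1,\ \operatorname{dist}(x,\bigcup_{i=1}^{n}\overline{B}_{\delta_{n}}(p_{i}))\}$. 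Each $f_{n}$ is continuous with values in $[0,1]$, and its zero set is exactly the union of the small balls $\overline{B}_{\delta_{n}}(p_{i})$, all sitting off the manifold.

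I would then check the four requirements. Properties 1 and 2 are immediate: each $\overline{B}_{\delta_{n}}(p_{i})$ lies at height at least $\epsilon/2-\delta_{n}>0$, hence is disjoint from $\mathcal{M}$, so $f_{n}>0$ on $\mathcal{M}$; since $y\equiv+1$ a misclassification forces $f_{n}=0$, and since in‑manifold perturbations only probe points of $\mathcal{M}$, both $R_{std}(f_{n})$ and $R^{in}_{adv}(f_{n},2\epsilon)$ vanish for every $n$. For Property 3, the normal segment $B^{nor}_{\epsilon}((t,0))=\{(t,s):|s|\le\epsilon\}$ meets $\overline{B}_{\delta_{n}}(p_{i})$ only when $|t-t_{i}|\le\delta_{n}$ (and then automatically at a point with $|s|\le\epsilon$, because $\epsilon/2+\delta_{n}<\epsilon$); hence the set of base points with nonzero normal adversarial risk is contained in $\bigcup_{i}[t_{i}-\delta_{n},t_{i}+\delta_{n}]$, of measure at most $2n\delta_{n}$, so $R^{nor}_{adv}(f_{n},\epsilon)\le 2n\delta_{n}\to 0$.

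It remains to force $R_{adv}(f_{n},\epsilon)=1$ once $n>1/(\sqrt{3}\epsilon)$, which is where the geometry enters and which I expect to be the step demanding the most care. For $x=(t,0)$ one has $\operatorname{dist}(x,p_{i})^{2}=(t-t_{i})^{2}+\epsilon^{2}/4$, so $\overline{B}_{\delta_{n}}(p_{i})$ intersects the open ball $B_{\epsilon}(x)$ iff $|t-t_{i}|<\sqrt{\tfrac{3}{4}\epsilon^{2}+2\epsilon\delta_{n}+\delta_{n}^{2}}$, a radius strictly exceeding $\tfrac{\sqrt{3}}{2}\epsilon=\sqrt{\epsilon^{2}-(\epsilon/2)^{2}}$. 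The centers $t_{i}$ are $1/n$‑spaced along $[0,1]$, and $n>1/(\sqrt{3}\epsilon)$ is exactly the inequality $1/n<\sqrt{3}\epsilon$ under which the intervals of radius $\tfrac{\sqrt{3}}{2}\epsilon$ about the $t_{i}$ already cover $[0,1]$; therefore every $(t,0)$ has some bad ball within ambient distance $\epsilon$, an adversary can drive $f_{n}$ to $0$ from every data point, and $R_{adv}(f_{n},\epsilon)=1$. The real obstacle here is recognizing that the height $\epsilon/2$ is the choice that turns the coverage radius into $\tfrac{\sqrt{3}}{2}\epsilon$ and hence yields precisely the stated threshold, while the radius $\delta_{n}$ must simultaneously shrink because it controls the measure $2n\delta_{n}$ of normally‑vulnerable points — so Properties 3 and 4 pull on the same parameters in opposite directions, and one must see that they are compatible.

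Conceptually, the perturbation witnessing $R_{adv}=1$ is a genuinely \emph{slanted} one: it moves tangentially by up to $\tfrac{\sqrt{3}}{2}\epsilon$ and normally by $\tfrac{\epsilon}{2}$ at the same time, so it is invisible both to $R^{nor}_{adv}(\cdot,\epsilon)$ (which only looks straight off the manifold) and to $R^{in}_{adv}(\cdot,2\epsilon)$ (which never leaves it). Once this picture is fixed the four verifications are the elementary distance estimates above. As a sanity check, for this family the right-hand side of \eqref{maineq1} equals $R_{std}+R^{nor}_{adv}+R^{in}_{adv}+\mathrm{NNR}=0+2n\delta_{n}+0+(1-2n\delta_{n})=1=R_{adv}$, using that the Nearby‑Normal‑Risk event $A\wedge B\wedge C$ here reduces to ``$(t,0)$ has no normal adversarial risk'' (event $B$ always holds since every $t$ has some $t_{i}$ within $1/n<2\epsilon$, and $C$ always holds since $f_{n}>0$ on $\mathcal{M}$); this not only shows the NNR term cannot be dropped but also that Theorem~\ref{riskdecom}(i) is tight.
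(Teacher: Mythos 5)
Your construction verifies the statement as literally written, and its geometric skeleton is the same as the paper's: put the ``bad'' region at height $\epsilon/2$ above a sparse family of locations spaced $1/n$ apart along the manifold, so that straight normal segments hit it only above a set of measure $O(1/n)$, while a slanted perturbation (tangential component $\tfrac{\sqrt{3}}{2}\epsilon$, normal component $\tfrac{\epsilon}{2}$) reaches it from \emph{every} manifold point once $n>\tfrac{1}{\sqrt{3}\epsilon}$ --- the identical Pythagorean computation $\sqrt{\epsilon^{2}-(\epsilon/2)^{2}}=\tfrac{\sqrt{3}}{2}\epsilon$ produces the identical threshold. Your checks of properties 1--3 and of $R_{adv}=1$ are correct (for the last one you can even skip the intersection estimate: the center $p_i$ itself lies in $B_{\epsilon}((t,0))$ when $|t-t_i|<\tfrac{\sqrt{3}}{2}\epsilon$), and the trade-off you isolate between $\delta_n$ (controlling $R^{nor}_{adv}\le 2n\delta_n$) and coverage is exactly the tension the paper resolves with its intervals $B_i$ of total length $1/n$.

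The one substantive difference is your choice $f_{n}=\min\{1,\operatorname{dist}(\cdot,\bigcup_i\overline{B}_{\delta_n}(p_i))\}$, and it is a weakness. This $f_n$ is nonnegative everywhere, so its decision boundary $\{f_n=0\}$ is a union of solid balls, no point has $f_n<0$, and every ``adversarial example'' arises only through the knife-edge case $f_n(x')y\le 0$ with equality. In particular $f_n$ violates Assumption A, which is the hypothesis under which Theorem~\ref{riskdecom} is proved; since the purpose of Theorem~\ref{tight} is to show that the NNR term cannot be dropped from Theorem~\ref{riskdecom}, a counterexample lying outside the class that theorem covers is weaker than intended --- it leaves open that the three-term bound might hold for all $f$ satisfying Assumption A. (Relatedly, under the thresholded semantics the paper itself uses in this proof, prediction $=\operatorname{sign}(2f_n-1)$, your $f_n$ would have standard risk close to $1$, so the example only works under the literal $f(x')y\le 0$ reading.) The paper avoids both issues by building a genuinely two-sided classifier whose decision boundary is the graph of $\pm g_n$ over $[0,1]$, with $g_n=1$ on the long intervals $A_i$ and $g_n=\epsilon/2$ on sparse intervals $B_i$ of total length $1/n$, so that $f_n$ changes sign across the boundary and Assumption A holds. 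Your example is easily repaired in the same spirit: replace the distance function by a signed version such as $f_n(x)=\operatorname{dist}(x,\{p_1,\dots,p_n\})-\delta_n$ (suitably clipped), which is strictly negative inside the balls, has the spheres as decision boundary (no interior, both signs nearby, so Assumption A holds), and leaves all of your estimates unchanged.
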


Thus all three terms except the NNR term go to zero, but the adversarial risk (the left side of Equation~\ref{maineq}) goes to one.

\subsection{Decomposition when y is Deterministic}

Let $\eta(x) = Pr(y=1|x)$. We consider here the simplistic setting when $\eta(x)$ is either zero or one, i.e., $y$ is a deterministic function of $x$. In this case, we can explain our decomposition result in a simpler way.

Let $Z^{nor}(f,\epsilon):=\{x \in \mathcal{M} : f(x)y >0 \text{ and } \exists x'\neq x \in B^{nor}_{\epsilon}(x), f(x')y(x) \leq 0\}$. That is, $Z^{nor}(f,\epsilon)$ is the set of points with no standard risk, but with a non-zero normal adversarial risk under a positive but less than $\epsilon$ normal perturbation. Let $\overline{Z^{nor}(f,\epsilon)} = \mathcal{M} \setminus Z^{nor}(f,\epsilon)$ be the complement of $Z^{nor}(f,\epsilon)$. For a set $A \subset \mathcal{M}$, let $\mu(A)$ denote the measure of $A$.

\begin{cor}\label{defriskdecom}
Let $\mathcal{M}$ be a smooth compact manifold in $\mathbb{R}^{D}$, and let $\eta(x) \in \{0,1\}$ for all $x \in M$. There exists a $\Delta>0$ depending only on $\mathcal{M}$ such that the following statements hold for any $\epsilon< \Delta$. For any score function $f$ satisfying assumption A,

\begin{enumerate}[(i)]
    \item  \begin{eqnarray} \label{maineq}
    R_{adv}(f,\epsilon) &\leq& R_{std}(f)+R^{in}_{adv}(f,2\epsilon) + R^{nor}_{adv}(f,\epsilon) \notag \\
    &+& \mu(\overline{Z^{nor}(f,\epsilon)} \cap B_{2\epsilon}(Z^{nor}(f,\epsilon))
    \end{eqnarray}
    
    \item If $R_{adv}^{nor}(f,\epsilon)=0$, then $R_{adv}(f,\epsilon) \leq R_{std}(f)+ R_{adv}^{in}(f,2\epsilon)$.
\end{enumerate}
\end{cor}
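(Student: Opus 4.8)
The plan is to obtain Corollary~\ref{defriskdecom} as the specialization of Theorem~\ref{riskdecom} to deterministic labels, $\eta(x)\in\{0,1\}$. Part~(ii) is then nothing to prove: it is the verbatim statement of Theorem~\ref{riskdecom}(ii). So the only real content is Part~(i), and for that it suffices to show that, when $y$ is a deterministic function of $x$, the Nearby-Normal-Risk term appearing in~\eqref{maineq1} is at most $\mu(\overline{Z^{nor}(f,\epsilon)}\cap B_{2\epsilon}(Z^{nor}(f,\epsilon)))$. Concretely I would prove the set inclusion
\[
\{x\in\mathcal{M}:\ A(x,y(x))\wedge B(x,y(x))\wedge C(x,y(x))\}\ \subseteq\ \overline{Z^{nor}(f,\epsilon)}\ \cap\ B_{2\epsilon}(Z^{nor}(f,\epsilon)),
\]
and then integrate the indicator of the left-hand event against the $x$-marginal of $p$ (which is what $\mu$ denotes here), so that $\mathrm{NNR}(f,\epsilon)$ is bounded by the measure of the right-hand set; substituting this into Theorem~\ref{riskdecom}(i) yields~\eqref{maineq}.

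To prove the inclusion, fix $x$ in the left-hand set. The membership $x\in\overline{Z^{nor}(f,\epsilon)}$ is immediate: event $A(x,y(x))$ asserts $f(x')y(x)>0$ for every $x'\in B^{nor}_\epsilon(x)$, which is exactly the negation of the second clause in the definition of $Z^{nor}(f,\epsilon)$ at $x$, so $x\notin Z^{nor}(f,\epsilon)$. For $x\in B_{2\epsilon}(Z^{nor}(f,\epsilon))$, take the witness $x'\in B^{in}_{2\epsilon}(x)$ provided by event $B(x,y(x))$, together with $z\in B^{nor}_\epsilon(x')$ satisfying $f(z)f(x')\le 0$. Event $C(x,y(x))$ forces $f(x')y(x)>0$, so $f(x')\neq 0$, hence $\mathrm{sign}\,f(x')=y(x)$, $z\neq x'$, and $f(z)y(x)\le 0$. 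If one can conclude $y(x')=y(x)$ (the crux; see below), then $f(x')y(x')>0$ and $f(z)y(x')\le 0$ with $z\neq x'\in B^{nor}_\epsilon(x')$, i.e.\ $x'\in Z^{nor}(f,\epsilon)$; since $B^{in}_{2\epsilon}(x)$ is defined by ambient distance we have $\|x-x'\|<2\epsilon$, giving $x\in B_{2\epsilon}(Z^{nor}(f,\epsilon))$. Part~(ii) follows because $R^{nor}_{adv}(f,\epsilon)=0$ makes $Z^{nor}(f,\epsilon)$ ($p$-essentially) empty, so the last term of~\eqref{maineq} vanishes.

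I expect the main obstacle to be precisely the implication ``$y(x')=y(x)$'' and, more generally, reconciling the purely $f$-based sign-flip formulation of event $B$ with the label-based definition of $Z^{nor}$: event $C$ controls the sign of $f$ on $B^{in}_{2\epsilon}(x)$ but says nothing a priori about the true labels there, so one must either exclude or separately charge witness points $x'$ that are themselves misclassified, and must handle the degenerate cases $f(x')=0$ or $f(z)=0$ where Assumption~A is needed to produce genuine opposite-sign points in every neighborhood of a decision-boundary point. A robust way to deal with this is probably to redo the case analysis behind Theorem~\ref{riskdecom}(i) directly in the deterministic setting: given $x$ with a witnessing ambient adversarial example $x^{\ast}\in B_\epsilon(x)$, project it to its nearest manifold point $p=\pi(x^{\ast})$ (well defined since $\epsilon<\Delta$), so that $x^{\ast}\in B^{nor}_\epsilon(p)$ and $p\in B^{in}_{2\epsilon}(x)$, and then in the residual case (no standard error at $x$, no in-manifold error within $2\epsilon$, no normal error within $\epsilon$) show that either $p\in Z^{nor}(f,\epsilon)$ or $x$ is already accounted for by one of the other three terms. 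A secondary, bookkeeping-level issue is that $\mathrm{NNR}$ is an expectation under $p$ while $\mu$ is a measure on $\mathcal{M}$, so one should fix once and for all that $\mu$ is the $x$-marginal of $p$ (equivalently, assume $p\ll\mu$) for the final bound to be literally of the displayed form.
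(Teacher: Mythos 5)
Your overall route is exactly the paper's: part (ii) is indeed the verbatim statement of Theorem~\ref{riskdecom}(ii), and for part (i) the paper simply remarks (in the appendix) that in the deterministic case $\mu(\overline{Z^{nor}(f,\epsilon)} \cap B_{2\epsilon}(Z^{nor}(f,\epsilon)))$ ``is the NNR,'' so the corollary ``follows directly from the proof of Theorem 1''; the main-text sketch makes the same identification inside the four-case analysis. So your reduction --- bound $\mathrm{NNR}(f,\epsilon)$ by the measure of $\overline{Z^{nor}(f,\epsilon)} \cap B_{2\epsilon}(Z^{nor}(f,\epsilon))$ and substitute into Theorem~\ref{riskdecom}(i), with $\mu$ read as the $x$-marginal of $p$ --- is precisely what the paper does, and your bookkeeping remark about $\mu$ is a point the paper leaves implicit.

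The crux you flag, however, is a genuine gap, and it is not closed anywhere in the paper: events $A$ and $B$ are statements about sign changes of $f$ along normal fibers, whereas membership of the witness $x'$ in $Z^{nor}(f,\epsilon)$ as printed additionally requires $f(x')y(x')>0$; event $C$ only gives $f(x')y(x)>0$, so if the true label flips between $x$ and $x'$ inside the $2\epsilon$ in-manifold ball, the witness is misclassified and lies outside $Z^{nor}(f,\epsilon)$. This cannot be argued away: take $\mathcal{M}$ a curve with label $+1$ left of a point $b$ and $-1$ right of it, let $f$ be positive on a tube of width $2\epsilon$ around $\mathcal{M}$ except that the tube thins to width $\epsilon/2$ over a short interval $D\subset(b,b+\epsilon)$, and concentrate $p$ on $[b-\epsilon/2,b]$; then every point of $[b-\epsilon/2,b]$ satisfies $A\wedge B\wedge C$ and has an ambient adversarial example above $D$, yet $Z^{nor}(f,\epsilon)=\emptyset$ because every fiber carrying a flip sits over a misclassified point, so the proposed inclusion --- and in fact inequality (i) with the printed $Z^{nor}$ --- fails, while Theorem~\ref{riskdecom}(i) survives only because NNR itself is large. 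Your second escape route (redoing the case analysis) is just the paper's sketch and hits the same snag at the same spot; the workable repair is your first one: replace $Z^{nor}(f,\epsilon)$ by the label-free set $S=\{w\in\mathcal{M}:\exists w'\neq w\in B^{nor}_{\epsilon}(w),\ f(w')f(w)\le 0\}$ (equivalently, drop the clause $f(x)y>0$ so that $Z^{nor}$ is literally the event set of $R^{nor}_{adv}$). With that definition the inclusion is immediate --- $A$ puts $x$ in $\overline{S}$, and the witness from $B$ lies in $S$ within $2\epsilon$, with your continuity/Assumption-A observation handling the degenerate $f(x')=0$ case --- and the corollary then does follow from Theorem~\ref{riskdecom} as you intended.
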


Therefore in this setting, the adversarial risk can be decomposed into the in-manifold risk and the measure of a neighborhood of the points that have non-zero normal adversarial risk.

\subsection{Proofs of Theorems~\ref{riskdecom} and \ref{tight}}

The complete proof of Theorem~\ref{riskdecom} is technical and is provided in the supplementary materials. Here we provide a sketch of the proof first. Then we give the complete proof of Theorem~\ref{tight}.

\subsubsection{Proof Sketch of Theorem~\ref{riskdecom}}

We first address the existence of the constant $\Delta$ that only depends on $\mathcal{M}$ in the theorem statement. Define a \textit{tubular neighborhood} of $\mathcal{M}$ as a set $\mathcal{N} \subset \mathbb{R}^D$ containing $\mathcal{M}$ such that any point $z \in \mathcal{N}$ has a unique projection $\pi(z)$ onto $\mathcal{M}$ such that $z-\pi(z) \in N_{\pi(z)} \mathcal{M}$. Thus the normal line segments of length $\epsilon$ at any two points $x,x' \in \mathcal{M}$ are disjoint. 

By Theorem 11.4 in \cite{bredon2013topology}, we know that there exists $\Delta$ such that $N:=\{y \in \mathbb{R}^{D}: dist(y,\mathcal{M}) < \Delta\}$ is a tubular neighborhood of $\mathcal{M}$. The $\Delta$ guaranteed by Theorem 11.4 is the $\Delta$ referred to in our theorem, and the budget $\epsilon$ is constrained to be at most $\Delta$.

For simplicity, we first sketch the proof of the case when $y$ is deterministic (the setting of Corollary 1).
 Consider a pair $(x,y) \sim p$, such that $x$ has an adversarial perturbation $x'$ within distance $\epsilon$. We show that one of the four cases must occur:
 
 \begin{itemize}
     \item $x'=x$ (standard risk).
     \item $x' \neq x$, $x' \in N_{x}\mathcal{M}$, and $f(x)y >0$ (normal adversarial risk).
     \item Let $x''=\pi(x')$ (the unique projection of $x'$ onto $\mathcal{M}$), then $d(x'',x) \leq 2 \epsilon$ and either
     \begin{itemize}
         \item $f(x'')y \leq 0$, and $x$ has an $2\epsilon$ in-manifold adversarial perturbation (in-manifold risk), or
         \item $f(x'')f(x') \leq 0$, which implies that $x$ is within $2\epsilon$ of a point $x'' \in \mathcal{M}$ that has non-zero normal adversarial risk. (NNR: nearby-normal-risk)
     \end{itemize}
 \end{itemize}

The second of these sets is $Z^{nor}(f,\epsilon)$ in the setting of Corollary~\ref{riskdecom}. One can see that the four cases correspond to the four terms in Equation~\ref{maineq}.

For the proof of Theorem~\ref{riskdecom}, one has to observe that since $y$ is not deterministic, the set $Z^{nor}(f,\epsilon)$ is random. One then has to average over all possible $Z^{nor}(f,\epsilon)$, and show that the average equals NNR. 

For the second part of Theorem 1 and Corollary 1, observe that if the normal adversarial risk is zero, then in the last case, $x''$ has non-zero normal adversarial risk, with normal adversarial perturbation $x'$. Unless $x''$ is on the decision boundary, by continuity of $f$ one can show that there exists an open set around $x''$ such that all points here have non-zero normal adversarial risk. This contradicts the fact that the normal adversarial risk is zero, implying that case 4 happens only on a set of measure zero (recall that by assumption A the decision boundary does not contain any open set).  This completes the proof sketch.

\subsubsection{Proof of Theorem~\ref{tight}}

Let $\mathcal{M} = [0,1]$ and fix $\epsilon <1/2$ and $n \geq 1$. We will think of data as lying in the manifold $\mathcal{M}$, and $\mathbb{R}^2$ as the ambient space. The true distribution is simply $\eta(x)=1$ for all $x \in \mathcal{M}$, hence $y \equiv 1$ (all labels on $\mathcal{M}$ are $1$).

Let $\ell_1=\frac{n-1}{n(n+1)}$ and $\ell_2 = \frac{1}{n^2}$. Note that $(n+1) \ell_1 + n \ell_2 = 1$. Consider the following partition of $\mathcal{M} = A_0 \cup B_1 \cup A_1 \cup B_2 \cup \cdots \cup B_n \cup A_n$, Where $A_{i}$ ($0 \leq i \leq n$) is of length $\ell_1$ and $B_i$ ($1 \leq i \leq n$) is an interval of length $\ell_2$. The interval $A_0, B_1, A_1,\cdots,B_n,A_n$ appear in this order from left to right.

For ease of presentation, we will consider $\{0,1\}$ binary labels, and build score functions $f_{n}$ taking values in $[0,1]$ that satisfy the conditions of the Theorem.

 For an $x \in A_{i}$ for some $0\leq i \leq n$, define  $g_{n}(x)=1$. For $x \in B_i$ for some $1 \leq i \leq n$, define $g_n(x)= \epsilon/2$. Observe that $\epsilon/2 < 1/4$. 

We now define the decision boundary of $f_{n}$ as the set of points in $\mathbb{R}^2$ on the ``graph'' of $g_n$ and $-g_n$. That is, \[ DB(f_{n}) = \left\{(x,cg_{n}(x)): x \in [0,1], c \in\{-1,1\}\right\}. \] 

See Figure~\ref{fig:lower_bd} for a picture of the upper decision boundary. Now let $f_{n}$ be any continuous function with decision boundary $DB(f_n)$ as above. That is, $f_{n}:\mathbb{R}^2 \rightarrow [0,1]$ is such that $f_{n}(x,t) > 1/2$ if $|t| < g_n(x)$, $f_{n}(x,t) < 1/2$ if $|t| > g_{n}(x)$ and $f_n(x,y) =1/2$ if $|t|=g_{n}(x)$.

\paragraph{In-manifold Risk Is Zero} Observe that since $\eta(x)=1$ on $[0,1]$, the in-manifold risk of $f_n$ is zero, since $f_n(x,0) > 1/2$, and so $\text{sign}(2f_n - 1)$ equals 1, which is the same as the label $y$ at $x$. This means that there are no in-manifold adversarial perturbations, no matter the budget. Thus $R^{in}_{adv}(f_n,\epsilon)=0$ for all $n \geq 1$.

\paragraph{Normal Adversarial Risk Goes To Zero} Next we consider the normal adversarial risk. If $x \in A_i$ for some $i$, then a point in the normal ball with budget $\epsilon$ is of the form $(x,t)$ with $|t| < \epsilon <1/2$, but $f_n(x,t)> 1/2$ for such points, and thus $\text{sign}(2f_n - 1) = y(x)$. Thus $x \in A_i$ does not contribute to the normal adversarial risk.

If $x \in B_i$ for some $i$ then $f_n(x, \epsilon) < 1/2$ while $f_n(x,0) >1/2$, and hence such $x$ contributes to the normal adversarial risk. Thus $R^{nor}_{adv}(f_n,\epsilon) = \sum_{i=1}^{n} \mu(B_i) = \sum_{i=1}^{n} \ell_{2} = 1/n$, which goes to zero as $n$ goes to infinity.

\paragraph{Adversarial Risk Goes To One} Now we show that $R_{adv}(f_n,\epsilon)$ goes to one. In fact, we will show that as long as $n$ is sufficiently large, the adversarial risk is 1. Consider $n$ such that $\ell_{1}:= \frac{n-1}{n(n+1)} < \sqrt{3}\epsilon$. Note that such an $n$ exists simply because $\ell_1$ goes to zero as $n$ goes to infinity, and $n > \frac{1}{\sqrt{3}\epsilon} $ works.

Clearly, points in $B_{i}$ contribute to adversarial risk as they have adversarial perturbations in the normal direction. However, if we consider $x \in A_{i}$ (which does not have adversarial perturbations in the normal direction or in-manifold), we show that there still exists an adversarial perturbation in the ambient space: that is, there exists a point $x'$ such that a) the distance between $(x',\epsilon/2)$ and $(x,0)$ is at most $\epsilon$, and b) $sign(2f_n(x,\epsilon/2)) \neq sign(2f_n(x,0))$. Let $x'$ be the closest point in $B:= \cup B_{i}$ to $x$. Then $|x'-x| \leq \ell_{1}/2 < \sqrt{3}\epsilon/2$. Thus the distance between $(x',\epsilon/2)$ and $(x,0)$ is at most $\sqrt{(\sqrt{3}\epsilon/2)^2+ (\epsilon/2)^2} = \epsilon$. Since $x' \in B$, $f_n(x',\epsilon/2) <1/2$ whereas $f_n(x,0) <1/2$, $(x',\epsilon/2)$ is a valid adversarial perturbation around $x$.

Thus for all $x \in [0,1]$, there exists an adversarial perturbation within budget $\epsilon$, and therefore $R_{adv}(f_n,\epsilon)=1$ as long as $n > \frac{1}{\sqrt{3}\epsilon}$. This completes the proof.

\begin{figure}
    \centering
    \includegraphics[width = 0.5\textwidth]{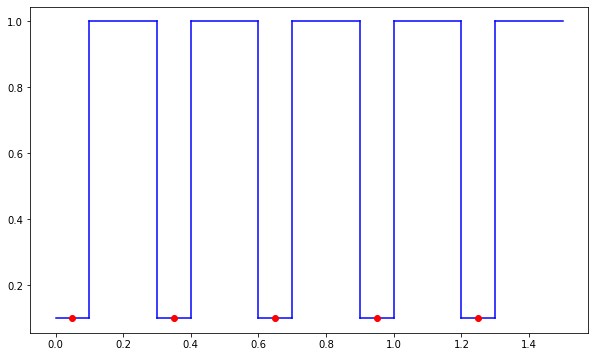}
    \caption{Lower bound illustration}
    \label{fig:lower_bd}
\end{figure}

\begin{table*}[h!]
\begin{center}
\caption{2D Adversarial risk comparison}
\label{table:2D_adv_table}
\begin{adjustbox}{width=0.9\columnwidth,center}
\begin{tabular}{|c|c|c|c|c|c|c|c|c|c|}
\hline
 Single Boundary & \multicolumn{2}{c|}{$f$} & \multicolumn{2}{c|}{$f^{adv}$} & \multicolumn{5}{c|}{$f^{nor}$} \\
\hline
$\epsilon$ & $R^{adv}$ & RHS &$R^{adv}$ & RHS &$R^{adv}$ & RHS & $R^{in}_{adv}(2\epsilon)$ & $R^{nor}_{adv}$ & $R_{std}$ \\
\hline
0.01 & 0.0110 & 0.022 & 0.0110 & 0.022 & 0.0090 & 0.0220 & 0.0140 & 0.0050 & 0.0050 \\
\hline
0.02 & 0.0130 & 0.0449 & 0.0130 & 0.0449 & 0.0130 & 0.0449 & 0.0290 & 0.0060 & 0.0060 \\
\hline
0.03 & 0.0230 & 0.063 & 0.0250 & 0.0671 & 0.0230 & 0.0633 & 0.0400 & 0.0120 & 0.0120 \\
\hline
0.05 & 0.0280 & 0.0794 & 0.0300 & 0.0784 & 0.0280 & 0.0794 & 0.0620 & 0.0040 & 0.0040 \\
\hline
0.1 & 0.0709 & 0.1652 & 0.0699 & 0.1645 & 0.0709 & 0.1650 & 0.133 & 0.0 & 0.0040 \\
\hline
0.15 & 0.0979 & 0.2831 & 0.1009 & 0.2886 & 0.1009 & 0.2866 & 0.1850 & 0.0050 & 0.0050 \\
\hline
0.2 & 0.128 & 0.3951& 0.126 & 0.3971& 0.128 & 0.4086 & 0.261 & 0.0050 & 0.0040 \\
\hline
0.25 & 0.1660 & 0.4966 & 0.1630 & 0.4931& 0.1660 & 0.4986 & 0.3259& 0.0040 & 0.0040 \\
\hline
0.3 & 0.1979 & 0.4509 & 0.1979 & 0.5613 & 0.1979 & 0.4505 & 0.35 & 0.0 & 0.0\\
\hline
\hline
Double Boundary & \multicolumn{2}{c|}{$f$} & \multicolumn{2}{c|}{$f^{adv}$} & \multicolumn{5}{c|}{$f^{nor}$} \\
\hline
$\epsilon$ & $R^{adv}$ & RHS &$R^{adv}$ & RHS &$R^{adv}$ & RHS & $R^{in}_{adv}(2\epsilon)$ & $R^{nor}_{adv}$ & $R_{std}$ \\
\hline
0.01 & 0.0080 & 0.0286 & 0.0060 & 0.0296 & 0.0070 & 0.0276 & 0.0180 & 0.0030 & 0.0030 \\
\hline
0.02 & 0.0240 & 0.0694 & 0.0230 & 0.2525 & 0.0240 & 0.0694 & 0.0490 & 0.0050 & 0.0050 \\
\hline
0.03 & 0.0510 & 0.1333 & 0.0460 & 0.1363 & 0.0510 & 0.1383 & 0.0949 & 0.0110 & 0.0110 \\
\hline
0.05 & 0.0620 & 0.1810 & 0.0620 & 0.1640 & 0.0629 & 0.1640 & 0.1139 & 0.0080 & 0.0080 \\
\hline
0.1 & 0.1170 & 0.3398 & 0.1169 & 0.3071 & 0.12 & 0.2746 & 0.2400 & 0.0060 & 0.0060 \\
\hline
0.15 & 0.1850 & 0.6059 & 0.1860 & 0.4895 & 0.1939 & 0.5948 & 0.3860 & 0.0040 & 0.0040 \\
\hline
0.2 & 0.242 & 0.8763 & 0.247 & 0.8002 & 0.265 & 0.878 & 0.5409 & 0.0060 & 0.0050 \\
\hline
0.25 & 0.3139 & 1. & 0.3169 & 0.9971 & 0.3239 & 1. & 0.6500 & 0.0080 & 0.0080 \\
\hline
0.3 & 0.386 & 0.9615 & 0.379 & 1. & 0.394 & 1. & 0.6520 & 0.0070 & 0.0060 \\
\hline
\end{tabular}
\end{adjustbox}
\end{center}
\end{table*}

\begin{table*}[!ht]
\begin{center}
\caption{3D Adversarial risk comparison}
\label{table:3D_adv_table}
\begin{adjustbox}{width=0.9\columnwidth,center}
\begin{tabular}{|c|c|c|c|c|c|c|c|c|c|}
\hline
 Single Boundary & \multicolumn{2}{c|}{$f$} & \multicolumn{2}{c|}{$f^{adv}$} & \multicolumn{5}{c|}{$f^{nor}$} \\
\hline
$\epsilon$ & $R^{adv}$ & RHS &$R^{adv}$ & RHS &$R^{adv}$ & RHS & $R^{in}_{adv}(2\epsilon)$ & $R^{nor}_{adv}$ & $R_{std}$ \\
\hline
0.1 & 0.0450 & 0.0992& 0.0410 & 0.092 & 0.0470 & 0.1002 & 0.0959 & 0.0050 & 0.0050 \\
\hline
0.2 & 0.1139 & 0.2297 & 0.0999 & 0.229 & 0.1099 & 0.2143 & 0.1929 & 0.0100 & 0.0199 \\
\hline
0.3 & 0.1550 & 0.3106 & 0.136 & 0.3216 & 0.1540 & 0.2852 & 0.239 & 0.0080 & 0.0265 \\
\hline
0.4 & 0.2089 & 0.3765 & 0.1680 & 0.3889 & 0.2059 & 0.3579 & 0.26 & 0.0080 & 0.0193 \\
\hline
0.5 & 0.247 & 0.4910 & 0.1860 & 0.4404 & 0.250 & 0.4104 & 0.252 & 0.0040 & 0.0174 \\
\hline
0.6 & 0.2700 & 0.5910 & 0.2179 & 0.5198 & 0.257 & 0.417 & 0.257 & 0.0090 & 0.0153\\
\hline
0.7 & 0.2600 & 0.6057 & 0.2009 & 0.7571 & 0.2731 & 0.4224 & 0.273 & 0.0030 & 0.0139 \\
\hline
0.8 & 0.2329 & 0.6775 & 0.1670 & 0.5630 & 0.2339 & 0.4083 & 0.2329 & 0.0020 & 0.0129\\
\hline
\hline
Double Boundary & \multicolumn{2}{c|}{$f$} & \multicolumn{2}{c|}{$f^{adv}$} & \multicolumn{5}{c|}{$f^{nor}$} \\
\hline
$\epsilon$ & $R^{adv}$ & RHS &$R^{adv}$ & RHS &$R^{adv}$ & RHS & $R^{in}_{adv}(2\epsilon)$ & $R^{nor}_{adv}$ & $R_{std}$ \\
\hline 
0.1 & 0.0649 & 0.1654 & 0.0789 & 0.153 & 0.0759 & 0.1654 & 0.1540 & 0.0130 & 0.0140 \\
\hline 
0.15 & 0.1460 & 0.3065 & 0.1280 & 0.2949 & 0.1510 & 0.3026 & 0.272 & 0.0220 & 0.0270 \\
\hline 
0.2 & 0.1700 & 0.3858 & 0.1370 & 0.3341 & 0.1670 & 0.3541 & 0.3040 & 0.0170 & 0.0170\\
\hline 
0.25 & 0.2049 & 0.4608 & 0.1500 & 0.4203 & 0.2099 & 0.4486 & 0.361 & 0.0210 & 0.0210 \\
\hline 
0.3 & 0.2159 & 0.4740 & 0.1810 & 0.4208 & 0.2119 & 0.4450 & 0.3289 & 0.0190 & 0.0190\\
\hline 
0.35 & 0.275 & 0.5176 & 0.2039 & 0.5289 & 0.2750 & 0.4830 & 0.356 & 0.0110 & 0.0130\\
\hline 
0.4 & 0.3000 & 0.6051 & 0.2069 & 0.5325 & 0.3040 & 0.6593 & 0.3690 & 0.0520 & 0.0080\\
\hline
\end{tabular}
\end{adjustbox}
\end{center}
\end{table*}

\begin{figure*}
  \centering
\begin{tabular}{p{3.5cm} p{3.5cm} p{3.5cm} p{3.5cm}}
\includegraphics[width=3cm]{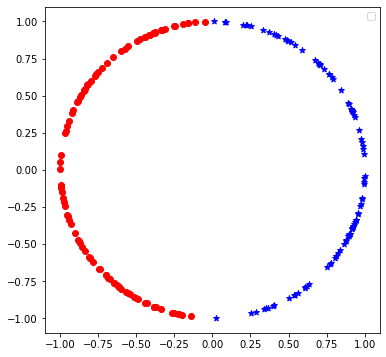}&
\includegraphics[width=3cm]{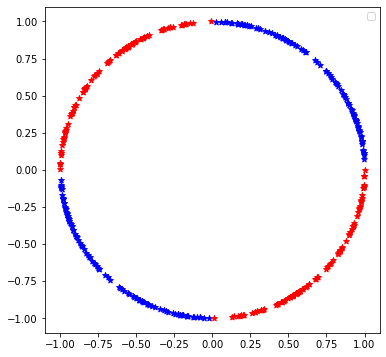}& 
\includegraphics[width=3cm]{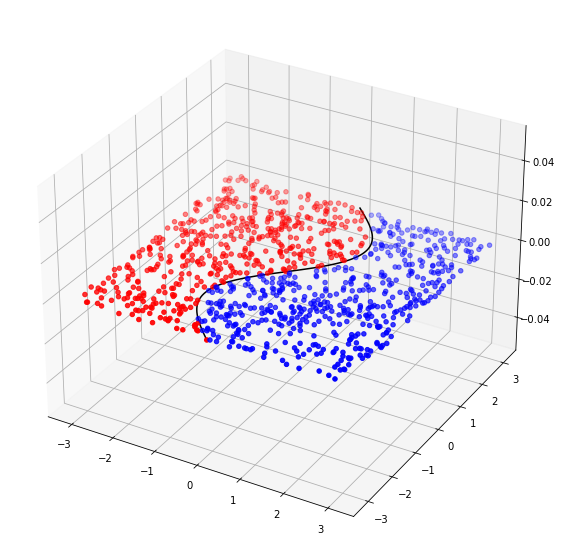}&
\includegraphics[width=3cm]{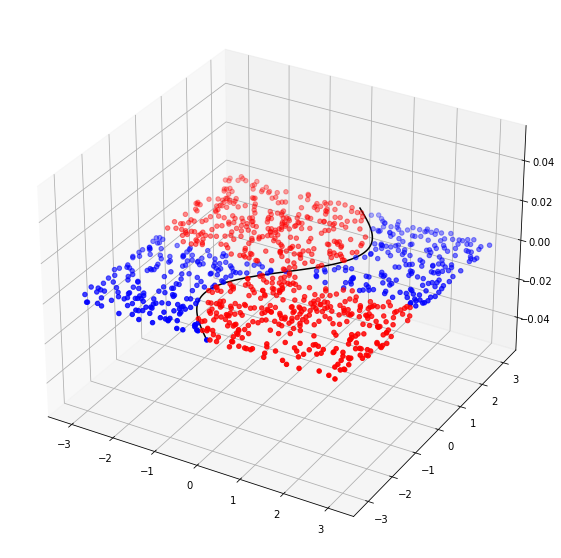}
\\
  \centering
\shortstack{a) 2D Single \\decision boundary} &
 \shortstack{b) 2D Double \\decision boundary} &
  \shortstack{c) 3D Single \\decision boundary} &
 \shortstack{d) 3D Double \\decision boundary}
\end{tabular}
\caption{In this figure, we show our four toy data set. On the left side is 2D data set on a unit circle. The single decision boundary data is linearly separated by the y-axis. And in the double decision boundary case, the circle is separated into 4 parts with x and y-axis. On the right side is the 3D data set. The data is distributed in a square area on $x_1x_2$-plane. In the single decision boundary example, the data is divided by the curve $x_1 = sin(x_2)$. And in the double decision boundary situation, we add the y-axis as the extra boundary. }
    \label{fig:2D_data}
\end{figure*}

\section{EXPERIMENT}
In this section, we verify the decomposition upper bound in Corollary~\ref{defriskdecom} on synthetic data sets. For both i) and ii) in Corollary~\ref{defriskdecom}, we empirically evaluate each term in the inequality on several classifiers and compare the values according to the claims in Corollary~\ref{defriskdecom}. 

In the following experiments, instead of using $l_2$ norm to evaluate the perturbation, we search the neighborhood under $l_{\infty}$ norm, which would produce a stronger attack than $l_2$ norm one. The experimental results indicate that our theoretical analysis may hold for an even stronger attack.


\subsection{Toy Data Set and Perturbed Data}
We generate four different data sets where 
we study both the \emph{single decision boundary case} and the \emph{double decision boundary case}. The first pair of datasets are in 2D space and the second pair is in 3D.  
We aim to provide empirical evidence for the claim $ii)$ in the Corollary~\ref{defriskdecom} using the single decision boundary data, having observed that one can sufficiently reduce $R^{nor}_{adv}$, allowing one directly compare $R_{adv}(f,\epsilon)$ and $R^{in}_{adv}(f,2\epsilon)$.
We aim to provide empirical evidence for the claim $i)$ in the Corollary~\ref{defriskdecom} using double boundary, since $R^{nor}_{adv}$ can not be sufficiently reduced using a simple classifier, since decision boundary is complicated.

For the 2D case, we sample training data uniformly from a unit circle $C_1: x_1^2 + x_2^2 =1$. For the single decision boundary data set, we set
\[
\begin{array}{rl}
     y = & 2\mathds{1}(x_1>0) -1  \text{ (Single Decision Boundary) }\\
     y = & 2\mathds{1}(x_1x_2 >0) -1  \text{ (Double Decision Boundary)}
\end{array}
\]
The visualization of the dataset is in Figure \ref{fig:2D_data} a) and b). In particular, we set unit circle $C_1$ has $\Delta = 1$, we set the perturbation budget to be $\varepsilon \in [0.01, 0.3]$. And the normal direction is alone the radius of the circle.

    

In the 3D case, we set the manifold to be  $\mathcal{M}: x_3 =0$ and generate training data in region $[-\pi,\pi ]\times [-\pi, \pi ]$ on $x_1x_2$-plane. We set
\[
\begin{array}{rl}
    y =  & 2\mathds{1}\left[x_1>sin(x_2)\right]-1 \text{(Single)} \\
    y = & 2 \mathds{1}\left[ (x_1-sin(x_2))x_2 >0\right]-1 \text{(Double)}
\end{array}
\]
Figure \ref{fig:2D_data} c) and d) show these two cases. For the single decision boundary example, due to the manifold being flat, we have $\Delta = \infty$,  we explore the $\epsilon$ value in range $[0.1, 0.8]$. For the double decision boundary, the distance to the decision boundary is half of the distance in the single boundary case. Therefore, we set the range of perturbation to be $[0.1, 0.4]$.

\subsection{Algorithm}

To empirically estimate the decomposition of adversarial risk, we need to generate adversarial data alone different directions, i.e. the normal direction risk $R^{nor}_{adv}$, the in-manifold risk $R^{in}_{adv}$ and the general adversarial risk $R_{adv}$. For general adversarial risk, we evaluate risks on perturbed example $x^{adv}$ computed by Projected Gradient Descent algorithm in~\citep{madry2017towards}.

By the definition of toy data sets, we know that the dimension of ambient space is 1. The normal space at point $x$ can be represented by $N_x{\mathcal{M}} = \{x + t\cdot v|0<t<\epsilon \}$, here $v$ is a unit normal vector. Therefore we could explicitly compute the normal vector $v$ and select normal direction adversarial data $x^{nor}$. 

We evaluate different components in the inequality in Corollary~\ref{defriskdecom} on three classifiers. The standard classifier $f$ trained by original training data set, the adversarial classifier $f^{adv}$ trained by Adversarial Training algorithm in~\cite{madry2017towards} and the classifier trained using adversarial samples generated in the normal direction $x^{nor}$, we denote it as $f^{nor}$. To compute the in-manifold perturbation, we design two methods. The first one is using grid search to go through all the perturbations in the manifold within the $\epsilon$ budget and return the point with maximum loss as in-manifold perturbation $x^{in}$. The second is using PGD method to find a general adversarial point $x^{adv}$ in ambient space and project $x^{adv}$ back to the data manifold $\mathcal{M}$. Due to grid-search being time-consuming, we use the second method in our experiments below. We further compare these two methods in supplementary materials. 

Here we propose a method based on Adversarial Training to compute $f^{nor}$ in Algorithm~\ref{alg:normal_AT}. One thing worth mentioning here is, instead of using grid search to find the actual $x^{nor}$, we use an intermediate method to generate normal data. We randomly choose a point along the normal direction within the $\epsilon$ budget to be our normal direction perturbed data. This might worsen the normal adversarial risk of $f^{nor}$, but our empirical results show that $R^{nor}_{adv}(f^{nor})$ still close to 0. 

\begin{algorithm}
	\caption{Normal direction Adversarial Training} 
	\label{alg:normal_AT}
	\begin{algorithmic}[1]
        \State Input: Training data set $\{x_i, y_i\}_{i=1}^n$, training iterations $K$, perturbation budget $\epsilon$
        \For {$iterations$ in $1,\ldots, K$}
            \For{$x_i$ in $\{x_i, y_i\}_{i=1}^n$}
                \State Find normal space $N_{x_i}(\mathcal{M})$
                for $x_i$.
                \State Find $v_i \in B^{nor}_{\varepsilon}(x_i)$ such that  $l(f(x_i +  v_i), y_i)\neq0$.
                \State $x^{nor}_i \leftarrow v_i$
                \State Update $f^{nor}$ with $x^{nor}_i $.
            \EndFor
        \EndFor
	\end{algorithmic} 
\end{algorithm}

Note in  Corollary~\ref{defriskdecom} claim $i)$, we have a component  $\mu(B_{\epsilon}(Z^{nor}(f, \epsilon))\cap \mathcal{M})$ on the right hand side (RHS) of the inequality. We also give a practical way of estimating such quantity in the empirical study. By the definition of $Z^{nor}(f, \epsilon)$, we first select point $x_i$ in training data such that there exists point $x_i^{\prime}$ in $N_{x_i}{\mathcal{M}}$ s.t $f(x_i^{\prime})\neq y_i$ to form set $\widehat{Z}$.  Since we uniformly sample points from data manifold, the volume of $B_{2\epsilon}(\widehat{Z})\cap \mathcal{M}$ is proportional to $\mu(B_{2\epsilon}(Z^{nor}(f, \epsilon))\cap \mathcal{M})$ which is a set derived by point wise augment $\widehat{Z}$ by a $2\varepsilon$-ball. In 2D example, this quantity is simply the length of curve segment on unit circle as the volume of $B_{2\epsilon}(z)$ for any $z\in\widehat{Z}$. In 3D example, we use area of $\widehat{Z}$ point wise augmented by a $2\epsilon$ square.
We list the RHS value for 2D and 3D datasets in Table~\ref{table:2D_adv_table} and Table~\ref{table:3D_adv_table} for all three classifiers. 

\subsection{Empirical Results and Discussion}

\textbf{2D Unit Circle} We generate 1000 training data uniformly. The classifier is a 2-layer feed-forward network. Each classifier is trained with Stochastic Gradient Descent (SGD) with a learning rate of $0.1$ for 1000 epochs. Also, since $\Delta=1$ for the unit circle, the upper bound of $\epsilon$ value is up to 1. Hence we run experiments for $\epsilon$ from 0.01 to 0.3. By increasing the $\epsilon$ budget, we also observe that the decision boundary of $f^{nor}$ becomes perpendicular to the data manifold. In Table~\ref{table:2D_adv_table}, the value of $R^{nor}_{adv}(f^{nor})$ also confirm our observation. We leave more discussion and visualization of this phenomenon in the supplementary material.

To verify our results in Corollary~\ref{defriskdecom} claim $i)$. We compute the adversarial risk for three classifiers. And for the upper bound, we evaluate the component $\mu(B_{\epsilon}(Z^{nor}(f,\epsilon))\cap\mathcal{M})$ following the description in Section 3.2. The right hand side value in the inequality is given in Table~\ref{table:2D_adv_table}. We could observe that the upper bounds hold for 2D data. 

Since we train $f^{nor}$ to minimize its empirical risk in normal direction. By Table~\ref{table:2D_adv_table}, we know $R^{nor}_{adv}(f^{nor})$ is close to zero. Therefore it is reasonable to study claim $ii)$ in Corollary~\ref{defriskdecom} using $f^{nor}$. The summation of in-manifold risk and standard risk of $f^{nor}$ certainly upper bounds $R_{adv}(f^{nor})$.

\textbf{3D $X_1X_2$-plane} We generate 1000 training data from the data set. The classifier is a 4-layer feedforward network. We use SGD with a learning rate of 0.1 and weight decay of 0.001 to train the network. The total training epoch is 2000. 

In Table~\ref{table:3D_adv_table}, we list same three classifiers trained on 3D data set. We have $R^{adv}$ been upper bounded by the right hand side of the inequality in Corollary~\ref{defriskdecom} claim $i)$. The claim $ii)$ also holds in 3D cases. 
Due to the limit of the space, we provide visualization of the decision boundary and additional empirical results in the supplemental material.

\section{CONCLUSION}
In this work, we study the adversarial risk of the machine learning model from the manifold perspective. We report theoretical results that decompose the adversarial risk into the normal adversarial risk, the in-manifold adversarial risk, and the standard risk with the additional Nearby Normal Risk term. We present a pessimistic case suggesting the additional Nearby Normal Risk term can not be removed in general, without additional assumptions. Observing that the Nearby Normal Risk term can be wiped out by enforcing zero normal adversarial risk,  our theoretical analysis suggests a potential training strategy that only focuses on the normal adversarial risk. 

\section*{Acknowledgements}
We thank anonymous reviewers for their constructive feedback. Mayank Goswami would like to acknowledge support from US National Science Foundation (NSF) awards CRII-1755791 and CCF-1910873.
Xiaoling Hu and Chao Chen were partially supported by grants NSF IIS-1909038 and CCF-1855760.

\bibliographystyle{plainnat}
\bibliography{example_paper.bib}

\clearpage

\begin{appendices}
\setcounter{thm}{0}
\setcounter{lemma}{0}
\setcounter{obs}{0}
\setcounter{prop}{0}
\section{PROOF OF THEOREM 1}

\begin{thm}[Risk Decomposition]
Let $\mathcal{M}$ be a smooth compact manifold in $\mathbb{R}^{D}$, and let data be drawn from $\mathcal{M} \times \{-1,1\}$ according to some distribution $p$. There exists a $\Delta>0$ depending only on $\mathcal{M}$ such that the following statements hold for any $\epsilon< \Delta$. For any score function $f$ satisfying assumption A,

\begin{enumerate}[(i)]
    \item $$
    R_{adv}(f,\epsilon) \leq R_{std}(f) + R^{nor}_{adv}(f,\epsilon) + R^{in}_{adv}(f,2\epsilon) +  \text{NNR}(f,\epsilon).
    $$

    \item If $R_{adv}^{nor}(f,\epsilon)=0$, then \[R_{adv}(f,\epsilon) \leq R_{std}(f)+  R_{adv}^{in}(f,2\epsilon)\]
\end{enumerate}

\end{thm}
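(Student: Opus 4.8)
The plan is to prove both parts by a single pointwise case analysis on how an adversarial perturbation of a point can arise, and then to pass to expectations. First I would pin down the constant $\Delta$: by the tubular neighborhood theorem (Theorem~11.4 in \cite{bredon2013topology}) there is a $\Delta>0$ depending only on $\mathcal{M}$ such that $N:=\{z\in\mathbb{R}^D:\mathrm{dist}(z,\mathcal{M})<\Delta\}$ is a tubular neighborhood; hence every $z\in N$ has a unique nearest point $\pi(z)\in\mathcal{M}$ with $z-\pi(z)\in N_{\pi(z)}\mathcal{M}$, and the open normal segments of length $\Delta$ at distinct points of $\mathcal{M}$ are pairwise disjoint. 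Take $\epsilon<\Delta$. Now fix $(x,y)$ and assume there is some $x'\in B_\epsilon(x)$ with $f(x')y\le 0$. If $x'=x$ then $(x,y)$ is counted by $R_{std}(f)$. Otherwise $x'\ne x$; since $\|x'-x\|<\Delta$ the projection $x'':=\pi(x')\in\mathcal{M}$ is defined, $x'-x''\in N_{x''}\mathcal{M}$, and $\|x'-x''\|=\mathrm{dist}(x',\mathcal{M})\le\|x'-x\|<\epsilon$, so also $\|x-x''\|\le\|x-x'\|+\|x'-x''\|<2\epsilon$. If $x''=x$, then $x'\in B^{nor}_\epsilon(x)$, $x'\ne x$, $f(x')y\le 0$, so $(x,y)$ is counted by $R^{nor}_{adv}(f,\epsilon)$. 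If $x''\ne x$, then $x''\in B^{in}_{2\epsilon}(x)$ and either $f(x'')y\le 0$, in which case $x''$ is an in-manifold adversarial perturbation and $(x,y)$ is counted by $R^{in}_{adv}(f,2\epsilon)$, or $f(x'')y>0$, in which case multiplying $f(x')y\le 0$ by $f(x'')y>0$ and using $y^2=1$ gives $f(x')f(x'')\le 0$ with $f(x'')\ne 0$, so $x'\ne x''$, $x'\in B^{nor}_\epsilon(x'')$, and $x''$ witnesses the event $B(x,y)$. The key observation is that $A(x,y)$ is exactly the negation of ``$(x,y)$ is counted by $R_{std}$ or by $R^{nor}_{adv}(f,\epsilon)$'' and $C(x,y)$ is exactly the negation of ``$(x,y)$ is counted by $R^{in}_{adv}(f,2\epsilon)$''; so whenever the first three terms fail to charge $(x,y)$, we are forced into the final sub-case and $A\wedge B\wedge C$ holds. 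Thus the indicator of ``$x$ has an $\epsilon$-adversarial perturbation'' is pointwise at most the sum of the four indicators; taking $\mathbb{E}_{(x,y)\sim p}$ and applying a union bound yields (i).

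For (ii), by (i) it is enough to show that $R^{nor}_{adv}(f,\epsilon)=0$ forces $\text{NNR}(f,\epsilon)=0$; equivalently, that the final sub-case above occurs only on a $p$-null set of pairs $(x,y)$. Suppose $(x,y)$ realizes $A\wedge B\wedge C$, with $B$-witness $x''\in B^{in}_{2\epsilon}(x)$ and normal perturbation $z\in B^{nor}_\epsilon(x'')$ satisfying $f(z)f(x'')\le 0$. From $C$ and $x''\in B^{in}_{2\epsilon}(x)$ we get $f(x'')y>0$, hence $f(x'')\ne 0$, $\mathrm{sign}\,f(x'')=y$, and therefore $f(z)y\le 0$ and $z\ne x''$; in particular $x''\notin DB(f)$ and $x''$ carries a genuine label-flipping normal perturbation for label $y$. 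I would now invoke continuity of $f$ together with smoothness of $\mathcal{M}$ --- which, via the tubular neighborhood, lets $z$ be transported continuously to a point $z_w\in B^{nor}_\epsilon(w)$ for all $w$ in a neighborhood of $x''$ --- to produce an open set $O\subseteq\mathcal{M}$ containing $x''$ on which $\mathrm{sign}\,f(w)=y$ and each $w$ admits a normal perturbation $z_w\ne w$ with $f(z_w)y\le 0$; in the degenerate case $f(z)=0$ one first replaces $z$ by a nearby point on a nearby normal fiber with $f$ of sign $-y$, which exists by Assumption~A. Every such $w$ that carries the label $y$ has nonzero normal adversarial risk, so $\Pr_{(w,y')\sim p}[\,w\in O,\ y'=y\,]\le R^{nor}_{adv}(f,\epsilon)=0$. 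Feeding this back, and covering the collection of witnesses (using separability of $\mathcal{M}$, and Assumption~A to discard the decision-boundary remainder, which has empty interior), shows the set of $(x,y)$ admitting such a witness is $p$-null; hence $\text{NNR}(f,\epsilon)=0$ and (ii) follows from (i).

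\noindent\textbf{Where the difficulty lies.}
Part (i) is essentially bookkeeping once $\Delta$ and $\pi$ are in hand; the only mildly delicate points are the inequality $\mathrm{dist}(x',\mathcal{M})\le\|x'-x\|$ (which produces the factor of $2$ in the in-manifold budget) and the sign manipulation $f(x')y\le 0,\ f(x'')y>0\Rightarrow f(x')f(x'')\le 0$. The real obstacle is (ii), and specifically the ``spreading'' argument: because labels are not deterministic, the witness $x''$ may be a single point of zero marginal measure, so one genuinely needs the open neighborhood $O$ and a careful accounting of the conditional label probability over $O$; the case where $f$ vanishes along a sub-segment of a normal fiber forces the use of Assumption~A (empty interior of $DB(f)$ together with nearby points of both signs); and extracting continuity of the normal-fiber transport requires the tubular-neighborhood structure rather than bare smoothness. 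I expect the non-deterministic-label accounting to be the subtlest step.
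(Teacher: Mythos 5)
Part (i) of your proposal is correct and is essentially the paper's argument: the same tubular-neighborhood constant $\Delta$, the same case split on $x'=x$, $\pi(x')=x$, $f(\pi(x'))y\le 0$, and the leftover case, with the same triangle-inequality step producing the $2\epsilon$ in-manifold budget and the same observation that the leftover case forces $A\wedge B\wedge C$. No issues there.

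Part (ii) has a genuine gap at its final step. You correctly show that around any witness $x''$ (with $f(x'')\neq 0$ and a sign flip along its normal fiber) there is an open set $O\subset\mathcal{M}$ every point of which admits a normal adversarial perturbation, and you correctly conclude only that $\Pr_{(w,y')\sim p}[w\in O,\ y'=y]\le R^{nor}_{adv}(f,\epsilon)=0$, i.e., that $O$ is $p$-null. But the points $(x,y)$ charged by NNR lie up to $2\epsilon$ away from their witnesses and in general do not belong to any such $O$; covering the set of witnesses by countably many null sets $O_i$ controls the mass of the witnesses, not the mass of the points that have witnesses nearby. So the sentence ``feeding this back \ldots shows the set of $(x,y)$ admitting such a witness is $p$-null'' does not follow, and in fact it is false at this level of generality: take $p$ concentrated on points $x$ whose own normal fibers and $2\epsilon$ in-manifold neighborhoods are classified correctly, while a sign flip along a normal fiber occurs at some $x''\in B^{in}_{2\epsilon}(x)$ carrying no $p$-mass (a variant of the paper's Theorem 2 construction with the mass restricted to the $A_i$ intervals); then $R^{nor}_{adv}=0$ yet NNR $=1$. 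The paper closes this differently: it treats the open set $W$ of points with normal adversarial perturbations as having \emph{nonzero} measure (tacitly assuming nonempty open subsets of $\mathcal{M}$ carry positive mass), so its existence directly contradicts $R^{nor}_{adv}(f,\epsilon)=0$; this forces $f(x'')=0$ for every would-be witness, and then Assumption A (indeed, $x''$ itself, since $f(x'')y=0\le 0$ and $x''\in B^{in}_{2\epsilon}(x)$) shows $C(x,y)$ fails, so the event $A\wedge B\wedge C$ is empty rather than merely null. To repair your argument you would need to either import that same positive-measure-of-open-sets hypothesis and run the contradiction as the paper does, or find some other way to transfer nullity from the witness neighborhoods to the $2\epsilon$-distant points being charged; the latter is not available.
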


\noindent\textbf{Proof of i):}
We first address the existence of the constant $\Delta$ that only depends on $\mathcal{M}$ in the theorem statement.

\begin{definition}[Tubular Neighborhood]
A \textit{tubular neighborhood} of a manifold $\mathcal{M}$ is a set $\mathcal{N} \subset \mathbb{R}^D$ containing $\mathcal{M}$ such that any point $z \in \mathcal{N}$ has a unique projection $\pi(z)$ onto $\mathcal{M}$ such that $z-\pi(z) \in N_{\pi(z)} \mathcal{M}$.
\end{definition}

By Theorem 11.4 in \cite{bredon2013topology}, we know that there exists $\Delta>0$ such that $N:=\{y \in \mathbb{R}^{D}: dist(y,\mathcal{M}) < \Delta\}$ is a tubular neighborhood of $\mathcal{M}$. This also implies that for any $0<\epsilon<\Delta$, the normal line segments of length $\epsilon$ at any two points $x,x' \in \mathcal{M}$ are disjoint, a fact that will be used later.

The $\Delta$ guaranteed by Theorem 11.4 is the $\Delta$ referred to in our theorem, and the budget $\epsilon>0$ is constrained to be at most $\Delta$.

Next we consider the left hand side, the adversarial risk:
\[
R_{adv}(f, \epsilon) := \displaystyle\mathop{\mathbb{E}}_{(x, y)\sim p}\mathds{1}(\exists x^{\prime}\in B_{\epsilon}(x): f(x')y \leq 0)
\]

Denote by $E(x,y)$ the event that $\exists x^{\prime}\in B_{\epsilon}(x): f(x')y \leq 0$.

We will write the indicator function above as the sum of indicator functions of four events. Specifically, define by $E_1(x,y),E_2(x,y),E_3(x,y),E_4(x,y)$ the following four events:

 \begin{itemize}
     \item $E_1(x,y)$: $f(x)y \leq 0$.
     
     \item $E_2(x,y)$: $f(x)y >0$ and $\exists x'\neq x \in B_{\epsilon}(x)$ such that $x'-x \in N_{x} \mathcal{M}$ and $f(x')y \leq 0$.
     \end{itemize}
     
     For the next two cases, let $x'\neq x \in B_{\epsilon}(x)$ be such that $x'-x \notin N_{x} \mathcal{M}$ and $f(x')y \leq 0$ (if such an $x'$ exists). Let $x''=\pi(x')$ be the unique projection of $x'$ onto $\mathcal{M}$. Note that $x'' \neq x$. Define:
     
     \begin{itemize}
         \item $E_3(x,y)$: $f(x'')y \leq 0$.
         \item $E_4(x,y)$: $f(x'')y > 0 \iff f(x'')f(x') \leq 0$.
     \end{itemize}
 
\begin{lemma}
\[\mathds{1}(E(x,y)) = \mathds{1}(E_1(x,y))+\mathds{1}(E_2(x,y))+\mathds{1}(E_3(x,y))+\mathds{1}(E_4(x,y)) \]

\end{lemma}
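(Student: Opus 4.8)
The plan is to prove the identity pointwise in $(x,y)$ by showing that the four events $E_1,E_2,E_3,E_4$ form a \emph{partition} of $E$: they are pairwise disjoint and their union is exactly $E$. The indicator identity then follows at once, since for pairwise disjoint sets $\mathds{1}(\bigcup_i E_i) = \sum_i \mathds{1}(E_i)$, and when in addition $\bigcup_i E_i = E$ this equals $\mathds{1}(E)$. I would first make explicit the (implicit) reading that $E_3$ and $E_4$ --- introduced in the text as ``the next two cases'' --- carry the running hypotheses $f(x)y>0$ and ``$x$ has no normal adversarial perturbation'' as conjuncts; without this the identity can fail (e.g.\ $E_1$ and $E_3$ could both fire), and with it the case analysis below is clean.

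For \textbf{coverage}, I would show $E = E_1 \cup E_2 \cup E_3 \cup E_4$ in two directions. The inclusions $E_i \subseteq E$ are immediate: for $E_1$ take $x'=x$ as the adversarial point, and for $E_2,E_3,E_4$ the witness $x' \in B_{\epsilon}(x)$ appearing in the definition already satisfies $f(x')y \leq 0$, which certifies $E$. For the reverse inclusion, assume $E$ holds and split cases: if $f(x)y \leq 0$ we are in $E_1$; otherwise $f(x)y>0$, so every adversarial perturbation $x'$ satisfies $x' \neq x$. If some such $x'$ has $x'-x \in N_{x}\mathcal{M}$ we are in $E_2$; otherwise fix an adversarial $x'$ with $x'-x \notin N_{x}\mathcal{M}$. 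Here I invoke the tubular neighborhood from Theorem~11.4 of \cite{bredon2013topology}: since $\epsilon < \Delta$, we have $x' \in N$, so $x'' := \pi(x')$ is well defined and $x'-x'' \in N_{x''}\mathcal{M}$. I would check $x'' \neq x$ (otherwise $x'-x = x'-x'' \in N_{x}\mathcal{M}$, contradicting the case), so $x''$ is a genuine in-manifold point distinct from $x$; then either $f(x'')y \leq 0$ (hence $E_3$) or $f(x'')y > 0$ (hence $E_4$), which exhausts the possibilities.

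For \textbf{disjointness}: $E_1$ is incompatible with $E_2,E_3,E_4$ because the latter all require $f(x)y>0$; $E_2$ is incompatible with $E_3,E_4$ because the latter live only in the branch where $x$ has no normal adversarial perturbation; and $E_3,E_4$ are mutually exclusive because, once the branch fixes a witness $x'$ and hence $x''=\pi(x')$, they are separated by the sign of $f(x'')y$. Along the way I would record the elementary sign fact behind the ``$\iff$'' in the definition of $E_4$: since $f(x')y \leq 0$ and $y \in \{-1,1\}$, the condition $f(x'')y>0$ is equivalent to $f(x'')f(x') \leq 0$.

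I expect the only real friction to be bookkeeping rather than mathematics: (i) fixing a single adversarial witness $x'$ inside the third branch so that $E_3,E_4$ are unambiguously defined despite the non-uniqueness of adversarial perturbations (a choice function, or equivalently phrasing $E_3$ as ``\emph{some} non-normal witness projects to a manifold point with $f(x'')y\leq 0$'' and $E_4$ as its complement within this branch); and (ii) the degenerate/boundary cases in the sign arithmetic, which are harmless because $f$ is continuous and $y$ is $\pm 1$-valued. No manifold input beyond the already-established existence of the tubular neighborhood and well-defined projection $\pi$ is required.
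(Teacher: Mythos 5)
Your proof is correct and takes essentially the same route as the paper's: a case analysis showing that $E_1,E_2,E_3,E_4$ are disjoint and cover $E$, with the tubular neighborhood (Theorem 11.4 of Bredon) guaranteeing that $x''=\pi(x')$ is well defined and that $x''\neq x$ because $x'-x\notin N_{x}\mathcal{M}$. Your explicit treatment of the implicit conjuncts carried by $E_3,E_4$ (that $E_1,E_2$ fail) and of fixing a witness $x'$ merely makes rigorous what the paper leaves tacit, and it is consistent with how the paper later uses these events in Lemma~2.
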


\begin{proof}
Assume $E(x,y)$ occurs, i.e, $\exists x^{\prime}\in B_{\epsilon}(x): f(x')y \leq 0$. Either $x'=x$ satisfies the condition (which is event $E_1$) or some $x' \neq x$ satisfies the condition. 

Now we further divide into the case when $f(x)y >0$ and $x'-x \in N_{x}\mathcal{M}$ (which is event $E_2$), or $f(x)y>0$ and $x'-x \notin N_{x}\mathcal{M}$. In the latter case, note that $x''=\pi(x')$ cannot equal $x$ as otherwise $x'-x$ would be in the normal space at $x$, since the projection map is unique inside the tubular neighborhood. Thus $x''$ is well-defined, and it is easy to see that the last two cases are disjoint and cover this remaining case. Thus we have shown that if $E(x,y)$ occurs, then one of the four disjoint events $E_i$ must occur, proving the lemma.

\end{proof}

Finally we have the following lemma, which completes the proof of the theorem after combining with Lemma~1.

\begin{lemma} The following relation holds between the risk and the expectation of the indicator functions in Lemma~1
\begin{enumerate}
    \item $\displaystyle\mathop{\mathbb{E}}_{(x, y)\sim p}\mathds{1}(E_1(x,y))=R_{std}(f)$
    \item  $\displaystyle\mathop{\mathbb{E}}_{(x, y)\sim p}\mathds{1}(E_2(x,y)) \leq R^{nor}_{adv}(f,\epsilon)$
    \item $\displaystyle\mathop{\mathbb{E}}_{(x, y)\sim p}\mathds{1}(E_3(x,y)) \leq R^{in}_{adv}(f,2\epsilon)$
    \item  $\displaystyle\mathop{\mathbb{E}}_{(x, y)\sim p}\mathds{1}(E_4(x,y)) \leq NNR(f,\epsilon)$
\end{enumerate}

\end{lemma}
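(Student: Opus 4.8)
The plan is to prove each of the four items as a pointwise comparison of indicator functions and then take expectations over $(x,y)\sim p$, invoking only monotonicity of $\mathbb{E}$. Item~1 is immediate: $E_1(x,y)=\{f(x)y\le 0\}$ is, by definition, the event whose expected indicator equals $R_{std}(f)$, so equality holds. Item~2 is a containment of events: if $E_2(x,y)$ holds, its witness $x'\ne x$ satisfies $x'-x\in N_x\mathcal{M}$ and $\|x-x'\|<\epsilon$, hence $x'\in B^{nor}_\epsilon(x)$, $x'\ne x$, $f(x')y\le 0$ --- exactly the event defining $R^{nor}_{adv}(f,\epsilon)$ --- so $\mathbb{E}\,\mathds{1}(E_2)\le R^{nor}_{adv}(f,\epsilon)$.

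For item~3 the one real ingredient is a distance bound on the projection $\pi$. Under $E_3$ there is $x'\in B_\epsilon(x)$ with $x'\ne x$, $x'-x\notin N_x\mathcal{M}$, $f(x')y\le 0$, and $x''=\pi(x')$ has $f(x'')y\le 0$. Since $\epsilon<\Delta$, $x'$ lies in the tubular neighborhood, $\pi(x')$ is its nearest point of $\mathcal{M}$, and because $x\in\mathcal{M}$ we get $\|x'-x''\|=d(x',\mathcal{M})\le\|x'-x\|<\epsilon$; the triangle inequality then yields $\|x-x''\|<2\epsilon$, so $x''\in B^{in}_{2\epsilon}(x)$ witnesses the event defining $R^{in}_{adv}(f,2\epsilon)$, giving $\mathbb{E}\,\mathds{1}(E_3)\le R^{in}_{adv}(f,2\epsilon)$.

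Item~4 is the substantive claim; I would prove the pointwise containment $E_4(x,y)\subseteq A(x,y)\wedge B(x,y)\wedge C(x,y)$. The $B$-part is constructive: with $x''=\pi(x')$ as above, $x''\in B^{in}_{2\epsilon}(x)$, and by the tubular-neighborhood property $x'-x''\in N_{x''}\mathcal{M}$ with $\|x'-x''\|<\epsilon$, so $x'\in B^{nor}_\epsilon(x'')$; moreover $f(x'')y>0$ while $f(x')y\le 0$, and as $y\in\{-1,1\}$ this forces $f(x')f(x'')\le 0$. Taking $x''$ for the ``$x'$'' and $x'$ for the ``$z$'' in the definition of $B$ shows $B(x,y)$ holds. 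The $A$-part and $C$-part use that $E_4$ is the residual case (the $E_i$ are pairwise disjoint by the construction preceding Lemma~1): being in $E_4$ excludes $E_2$, which (using $f(x)y>0$, valid under $E_4$) means $x$ has no normal adversarial perturbation, i.e.\ $A(x,y)$; and being in $E_4$ excludes $E_3$, which --- when $E_3$ is read as the event that $x$ possesses a $2\epsilon$ in-manifold adversarial example, as the proof sketch indicates --- is exactly $\neg C(x,y)$. Hence $E_4\subseteq A\wedge B\wedge C$ and $\mathbb{E}\,\mathds{1}(E_4)\le\mathrm{NNR}(f,\epsilon)$.

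The hard part, and the place demanding care, is item~4's bookkeeping: one must pin down $E_3$ and $E_4$ so that they split the ``non-normal ambient perturbation'' case according to whether an in-manifold $2\epsilon$ adversarial example exists at $x$ --- so that $\neg E_3$ is literally the event $C$ --- rather than having them depend on one chosen witness $x'$. A secondary, routine technicality is the mismatch between the open ambient ball $B_\epsilon(x)$ entering through the $E_i$ and the closed balls ($\|x-x'\|\le\epsilon$) defining $B^{nor}_\epsilon$ and $B^{in}_{2\epsilon}$: a witness at distance exactly $\epsilon$ must be absorbed, which is handled by continuity of $f$ (along the ray toward such a witness, $f\cdot y$ is still $\le 0$ strictly inside the ball) together with Assumption~A, or simply by adopting a consistent ball convention. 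Aside from these points, each item is a one-line event containment followed by monotonicity of expectation.
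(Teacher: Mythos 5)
Your proposal is correct and follows essentially the same route as the paper's proof: items 1--2 by definition, item 3 via the nearest-point property of the tubular projection plus the triangle inequality, and item 4 by exhibiting $x''=\pi(x')$ as the witness for $B(x,y)$ while deducing $A$ and $C$ from the residual-case structure. Your explicit fix of the bookkeeping --- redefining the $E_3$/$E_4$ split so that $\neg E_3$ is literally $C(x,y)$ rather than a statement about one chosen witness, and noting the open/closed ball convention --- addresses points the paper's own argument glosses over, and is the right way to make the containment $E_4\subseteq A\wedge B\wedge C$ airtight.
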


\begin{proof}
1) and 2) follow by definitions of standard adversarial risk and normal adversarial risk, respectively. Consider the setting of $E_3(x,y)$: i.e., $f(x)y >0$, the adversarial perturbation $x'$ is not in the normal direction (so $f(x')y \leq 0$), and $f(x'')y \leq 0$. Observe that by the triangle inequality, $d(x,x'') \leq d(x,x')+d(x',x'') \leq \epsilon+\epsilon = 2\epsilon$, simply because a) $x'$ is within the $\epsilon$-ball of $x$, and b) $x''$ is closer to $x'$ than $x$.

This means that there is a point $x'' \in B^{in}_{2\epsilon} (x)$ such that $f(x'')y \leq 0$. The expectation over a random $(x,y)\sim p$ of this event is clearly at most $R^{in}_{adv}(f,2\epsilon)$ (the inequality need not be tight because $x$ may have adversarial perturbation within $2\epsilon$ and also satisfy some other events like $E_1$).

Lastly, by the definition of the NNR, we see that $A(x,y)$ occurs when $E_1(x,y)$ or $E_2(x,y)$ do not. Also $C(x,y)$ implies that the event $E_{3}(x,y)$ does not occur. We are now in the situation where $x''$ is within $2\epsilon$ of $x$, $f(x')y \leq0$, and $f(x'')y >0$. But this implies that $f(x'')f(x') \leq 0$, and since $x' \in B^{nor}_{\epsilon}(x'')$, it implies that $B(x,y)$ occurs. Thus all of $A(x,y)$, $B(x,y)$ and $C(x,y)$ occur, which is the definition of NNR.
\end{proof} 

\noindent\textbf{Proof of ii)}

If $R^{nor}_{adv}(f,\epsilon)=0$, we claim that $NNR(f,\epsilon)=0$. Setting these two terms to zero in i) proves ii).

Note that although $R^{nor}_{adv}(f,\epsilon)=0$, it does not imply that there are no normal adversarial perturbations for any $x$--- it just means that the measure of such $x$ with normal adversarial perturbation is zero. 

Also note that $R^{nor}_{adv}(f,\epsilon)=0$ does not exclude $A(x,y)$ or $C(x,y)$ from occurring (in fact $A$ occurs for almost all $x$). Thus the proof will focus on the measure of points where $B(x,y)$ can occur. We will prove the following lemma, which will complete the proof of the theorem.

\begin{lemma}
Let $(x,y)$ be such that $B(x,y)$ occurs, i.e., there exist $x' \in B_{\epsilon}(x)$ and $x''=\pi(x)$ such that $f(x')y \leq 0$, $f(x'')y >0$ and $d(x,x'') \leq 2 \epsilon$. Then $C(x,y)$ cannot occur, i.e., there exists a point $w \in B^{in}_{2\epsilon}(x)$ such that $f(w)y \leq 0$. Consequently, $NNR(f,\epsilon)=0$.
\end{lemma}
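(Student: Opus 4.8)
The plan is to argue by contradiction. Suppose $R^{nor}_{adv}(f,\epsilon)=0$ but $NNR(f,\epsilon)>0$. Then the event $A(x,y)\wedge B(x,y)\wedge C(x,y)$ has positive probability, so at least one pair $(x,y)$ satisfies $B(x,y)\wedge C(x,y)$. Unfolding $B(x,y)$ produces $\hat x\in\mathcal M$ with $\|x-\hat x\|\le 2\epsilon$ and $\hat z\in B^{nor}_\epsilon(\hat x)$ with $f(\hat z)f(\hat x)\le 0$. If $f(\hat x)=0$ then $\hat x\in B^{in}_{2\epsilon}(x)$ already gives $f(\hat x)y\le 0$, directly contradicting $C(x,y)$; so $f(\hat x)\neq 0$, and since $\hat x\in B^{in}_{2\epsilon}(x)$, the hypothesis $C(x,y)$ forces $f(\hat x)y>0$. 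Taking $y=1$ without loss of generality, we get $f(\hat x)>0$, $f(\hat z)\le 0$, and $\hat z\neq\hat x$; set $v_0:=\hat z-\hat x\in N_{\hat x}\mathcal M$, so $0<\|v_0\|\le\epsilon$.

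The crux is to upgrade the single point $\hat x$ into an \emph{open} subset of $\mathcal M$ all of whose points have nonzero normal adversarial risk, which contradicts $R^{nor}_{adv}(f,\epsilon)=0$ once one assumes that the data marginal $\mu$ assigns positive mass to every nonempty open subset of $\mathcal M$ (equivalently, that $\mathcal M$ is the support of $p$; something of this sort is needed for the statement to hold). Consider first the non-degenerate case $f(\hat z)<0$. Since $\mathcal M$ is smooth and $\epsilon<\Delta$, the orthogonal projection $P_w\colon\mathbb R^D\to N_w\mathcal M$ varies continuously with $w\in\mathcal M$; as $P_{\hat x}v_0=v_0\neq 0$, the vector $v_w:=\|v_0\|\,P_wv_0/\|P_wv_0\|$ is well defined and continuous for $w$ near $\hat x$, lies in $N_w\mathcal M$, is nonzero, has norm $\|v_0\|\le\epsilon$, and satisfies $w+v_w\to\hat x+v_0=\hat z$. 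Continuity of $f$ then yields an open $V\ni\hat x$ in $\mathcal M$ on which $f(w)>0$ and $f(w+v_w)<0$ simultaneously. For every $w\in V$ and every label $\ell$, either $w+v_w$ (if $\ell=1$) or a small nonzero normal perturbation of $w$ (if $\ell=-1$, using $f(w)>0$) lies in $B^{nor}_\epsilon(w)\setminus\{w\}$ and witnesses that $w$ contributes to the normal adversarial risk; hence $R^{nor}_{adv}(f,\epsilon)\ge\mu(V)>0$, a contradiction.

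It remains to reduce the degenerate case $f(\hat z)=0$ to the above. Here $\hat z\in DB(f)$, so Assumption A furnishes points $u$ arbitrarily close to $\hat z$ with $f(u)<0$; since $\hat z$ lies inside the tubular neighborhood, for such $u$ the projection $\pi(u)$ is near $\hat x$ (so $f(\pi(u))>0$) and $d(u,\mathcal M)=\|u-\pi(u)\|$ is near $\|v_0\|$. If one can pick such a $u$ with $d(u,\mathcal M)\le\epsilon$, then $u$ is a strict-sign normal perturbation of $\pi(u)$ and the previous paragraph applies with center $\pi(u)$; this is automatic when $\|v_0\|<\epsilon$. The delicate situation is $\|v_0\|=\epsilon$ with the nearby negative values of $f$ all lying just outside the budget ball; I would handle it by first traversing the normal segment from $\hat x$ toward $\hat z$ to locate a zero of $f$ strictly inside the ball, or by an arbitrarily small perturbation of $\hat x$ along $\mathcal M$, and I expect \emph{this boundary bookkeeping} --- rather than the continuity argument itself --- to be the principal obstacle. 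Once $B\wedge C$ is shown impossible, $\mathds 1(A\wedge B\wedge C)\equiv 0$, hence $NNR(f,\epsilon)=0$; substituting this and $R^{nor}_{adv}(f,\epsilon)=0$ into part (i) gives $R_{adv}(f,\epsilon)\le R_{std}(f)+R^{in}_{adv}(f,2\epsilon)$, which is (ii).
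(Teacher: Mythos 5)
Your core argument is the same as the paper's: propagate the single bad configuration $(\hat x,\hat z)$ to an \emph{open} subset of $\mathcal M$ all of whose points admit normal adversarial perturbations, and contradict $R^{nor}_{adv}(f,\epsilon)=0$. The paper does this by taking sign-constant neighborhoods $U\ni x''$ and $V\ni x'$, forming the normal-bundle preimage $U'=\{z\in\mathcal M_\Delta:\pi(z)\in U\}$ and projecting $W=\pi(U'\cap V)$; your transport of the fixed normal vector $v_0$ via the continuously varying normal projections $P_w$ accomplishes the same thing and is sound. Your explicit treatment of both labels is a detail the paper leaves implicit, and your flag that one must assume the marginal of $p$ charges every nonempty open subset of $\mathcal M$ is correct and sharp: the paper's step ``since the measure of $W$ is not zero'' silently uses exactly this, and without it even part (ii) of the theorem fails (put the data uniformly on a sub-arc with $y\equiv 1$ and let $\{f\le 0\}$ dip to within normal distance $\epsilon/2$ of the manifold only over a region of $\mathcal M$ outside, but within ambient distance $\epsilon$ of, the support: then $R_{std}=R^{in}_{adv}=R^{nor}_{adv}=0$ yet $R_{adv}>0$).

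The unresolved case you admit to ($f(\hat z)=0$ with $\|v_0\|=\epsilon$) is, however, a real fork in the road, and which side of it you are on depends on which hypothesis you unfold. The lemma as stated, and as it is actually used (to bound the event $E_4$ in the proof of part (i)), hands you $x'$ in the \emph{open} ambient ball $B_\epsilon(x)$, so $\|x'-\pi(x')\|=d(x',\mathcal M)\le\|x'-x\|<\epsilon$ strictly; the boundary case then never occurs, your reduction of the degenerate case (pick $u$ near $\hat z$ with $f(u)<0$, note $d(u,\mathcal M)<\epsilon$ for $u$ close enough, and run the strict-sign argument at $\pi(u)$) closes the proof, and this is precisely how the paper disposes of $f(x')=0$. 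But in the form you unfolded --- the literal event $B(x,y)$ from the NNR definition, with $\hat z$ in the \emph{closed} normal ball $B^{nor}_\epsilon(\hat x)$ of a point $\hat x\in B^{in}_{2\epsilon}(x)$ --- the boundary case is not bookkeeping: the implication is false there, and so is the lemma's final sentence ``consequently $\mathrm{NNR}(f,\epsilon)=0$.'' Take $\mathcal M=[0,1]\times\{0\}$, data uniform with $y\equiv 1$, and $f(x_1,x_2)=\epsilon+|x_1-\tfrac12|-x_2$ (Assumption A holds). Then $f>0$ on $\mathcal M$ and on every closed normal $\epsilon$-segment except at the single point $(\tfrac12,\epsilon)$, so $R_{std}=R^{in}_{adv}=0$ and $R^{nor}_{adv}(f,\epsilon)=0$ (only the measure-zero point $(\tfrac12,0)$ has a normal perturbation); yet every $x=(x_1,0)$ with $0<|x_1-\tfrac12|\le 2\epsilon$ satisfies $A\wedge B\wedge C$ (witness $x'=(\tfrac12,0)$, $z=(\tfrac12,\epsilon)$, $f(z)f(x')=0$), so $\mathrm{NNR}(f,\epsilon)>0$. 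So your instinct that this case cannot be pushed through was right; the repair is not to fight the boundary case but to prove the statement for the open-ball ($E_4$) configuration, which is all that part (ii) needs, rather than for the literal NNR event, whose vanishing the paper over-claims.
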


\begin{proof}
We first claim that if $B(x,y)$ occurs, it must be the case that $f(x'')=0$. Assuming this, if $f(x'')=0$, then by Assumption A we know there exists an $s \in B_{\epsilon}(x'') \cap B_{2 \epsilon}(x)$ such that $f(s)y \leq 0$, which imply that $C(x,y)$ cannot occur. This will complete the proof of the lemma.

To prove that $f(x'')=0$, consider what happens if $f(x'') \neq 0$. Assume first that $f(x') \neq 0$, and note that $f(x')f(x'') \leq 0$. By continuity of $f$, there exist open neighborhoods $U \ni x''$ and $V \ni x'$ such that $f$ has the same sign on all of $U$ and the same sign on all of $V$, i.e., $sign(f|U) =sign(f(x''))$ and $sign(f|V) = sign(f(x'))$.

Consider the normal bundle on $U$ defined as the set $U'=\{ y \in \mathcal{M}_{\Delta}: \pi(y) \in U\}$. In other words, $U'$ is the union of the normal line segments passing through points in $U$ (here $\mathcal{M}_{\Delta}$ denotes the tubular neighborhood of $\mathcal{M}$). Note that $U'$ is an open set.

Define $W'=U' \cap V$, and $W =\pi(W')$. $W \subset \mathcal{M}$ is an open set, but for every $w \in W$, there exists a point $w' \in W' \cap B^{nor}_{\epsilon}(w)$ such that $f(w')f(w) \leq 0$. Therefore there exists anormal adversarial perturbation for every point in $W$. Since the measure of $W$ is not zero, this contradicts the fact that $R^{nor}_{adv}(f,\epsilon)=0$.

The proof is completed by observing that in the remaining case when $f(x'') \neq 0$ but $f(x')=0$, there must exist (by assumption A) a point $w$ near $x'$ such that $f(w) \neq 0$ and $f(w)y <0$. This lands us in the previous case, which we showed contradicts the hypothesis that $R^{nor}_{adv}(f,\epsilon)=0$.
\end{proof}

\textbf{Remark}: In Corollary~\ref{defriskdecom}, $\mu(\overline{Z^{nor}(f,\epsilon)} \cap B_{2\epsilon}(Z^{nor}(f,\epsilon))$ is the NNR under deterministic case. Therefore, Corollary~\ref{defriskdecom} follows directly from the proof of Theorem 1.

\section{ADDITIONAL EXPERIMENTS}
In the main paper, we leave some experimental results to discuss in this supplementary materials. In the following section, we will first compare different ways of generating in-manifold attack data. In the later section, we compare the decision boundary of different classifiers. By visualization of the decision boundary, we aim to show that the defense training algorithm can defend the model against adversarial examples in the normal direction implying that the adversarial risk in the normal direction can be controlled. 

Also, we need to mark out that when we have a small $\epsilon$ value. The RHS for classifier $f^{nor}$ might be a little bit smaller than claim $ii)$ in Corollary~\ref{defriskdecom}. This is due to the fact that our way of computing measure $\mu(\overline{Z^{nor}(f,\epsilon)} \cap B_{2\epsilon}(Z^{nor}(f,\epsilon))$ includes the standard risk $R^{std}$ and normal risk $R^{nor}$. So the summation of $\mu(\overline{Z^{nor}(f,\epsilon)} \cap B_{2\epsilon}(Z^{nor}(f,\epsilon))$ and $R^{in}_{adv}$ forms the RHS. When we have a small $\epsilon$ value, the points with non-zero normal adversarial risk are concentrated near the decision boundary. Since we compute the $\mu(\overline{Z^{nor}(f,\epsilon)} \cap B_{2\epsilon}(Z^{nor}(f,\epsilon))$ based on the ratio between the length of line segment (or area of cube in 3D) and the circumference of unit circle (or area of $x_1x_2$-plane unit square), the ratio could be close to zero. Therefore, we might have the value of RHS smaller than the summation of $R^{in}_{adv} + R^{std} + R^{nor}_{adv}$. Aside from this, RHS still upper bounds $R^{adv}$ in all cases. 

\subsection{In-Manifold Attack Algorithm}
To estimate the \emph{in-manifold adversarial risk},  we have tested two potential algorithms for generating in-manifold adversarial examples. We present our observations on these two methods. Our empirical study in the paper leverage one of the two methods presented below, which generates a more powerful in-manifold adversarial example. 

One way to generate the adversarial samples is by brutal force. We use the grid search method to search the $B^{in}_{\epsilon}(x)$ region and find the maximum loss point in that region. We treat the maximum loss point as the in-manifold adversarial data. We call this approach the grid search method. Another approach we name as the projected method. We set the step size of the grid search proportional to the perturbation budget $\epsilon$. In general, we search 100 points in 1D cases and 400 points in the 2D manifold. In the projected method, we first use a general adversarial attack algorithm to generate adversarial data in ambient space. Then we project the generated adversarial example back to the manifold and return the results as our in-manifold adversarial data. In the following experiment, we use PGD as our generator of adversarial data in ambient space. Both methods will find in-manifold data that is adversarial to the given model. The rest of the experiment settings follow Section 3 in the main paper.
\setcounter{figure}{3}
\begin{figure*}[h!]
\begin{tabular}{p{8cm} p{8cm}}
\centering
\includegraphics[width=6.5cm]{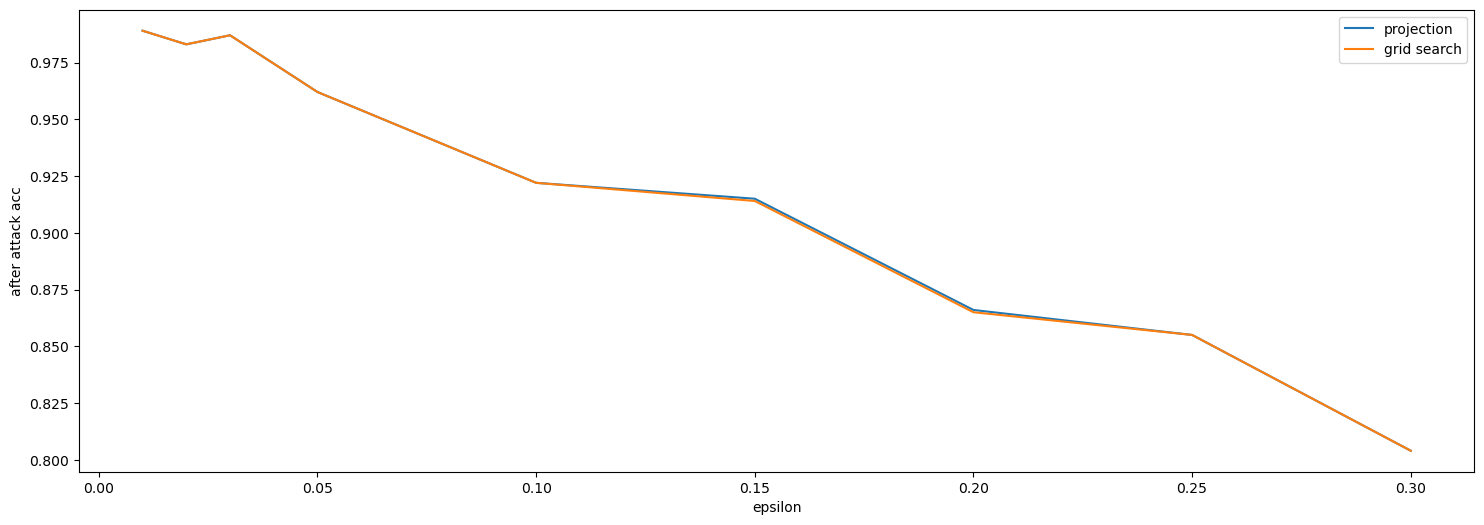}&
\includegraphics[width=6.5cm]{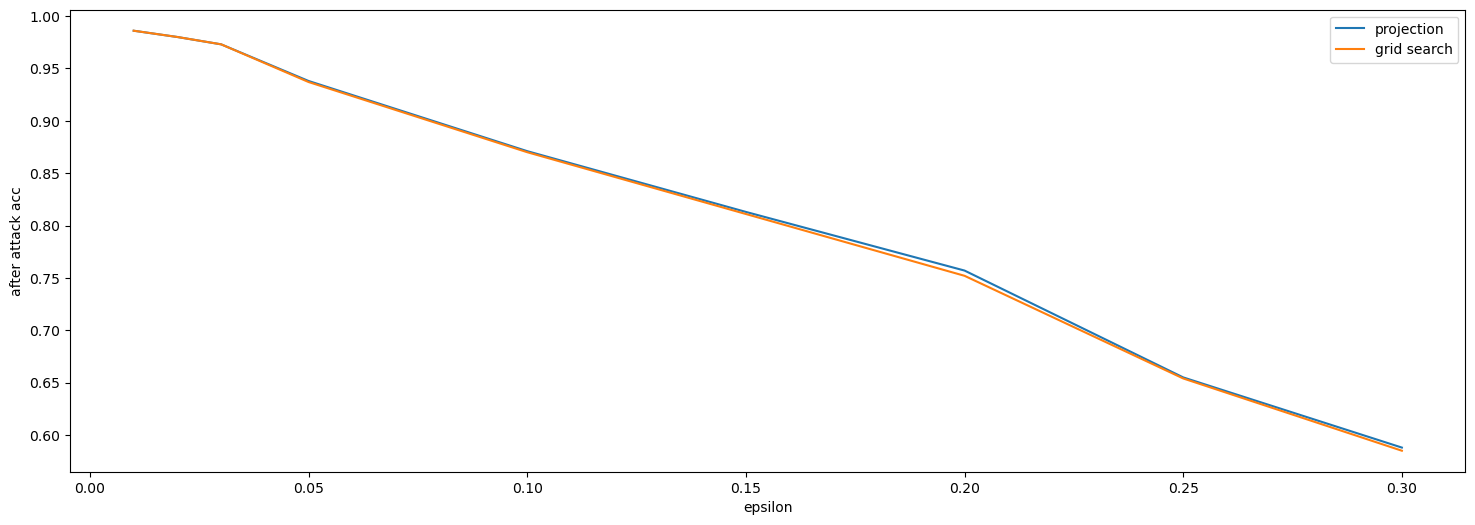}\\
\centering
\shortstack[c]{a) 2D Single \\decision boundary} &
$\;\;\;\;\;\;\;\;\;\;\;\;\;\;\;\;\;\;\;\;\;\;\;\;$\shortstack[c]{b) 2D Double \\decision boundary}\\
\centering
\includegraphics[width=6.5cm]{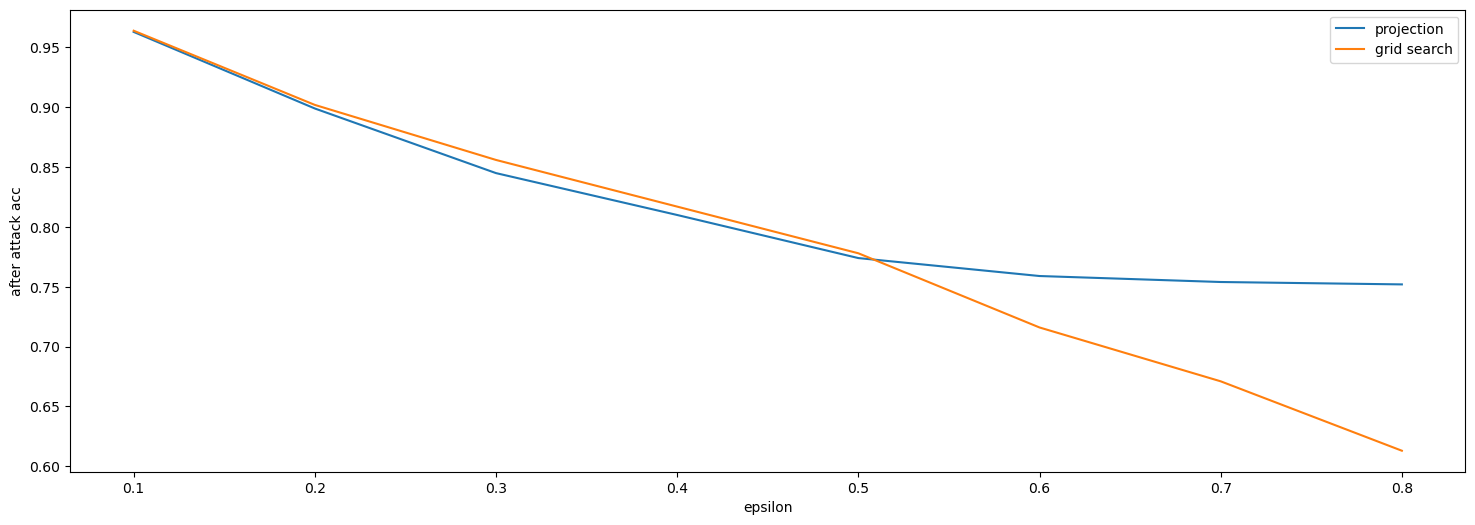} &
\includegraphics[width=6.5cm]{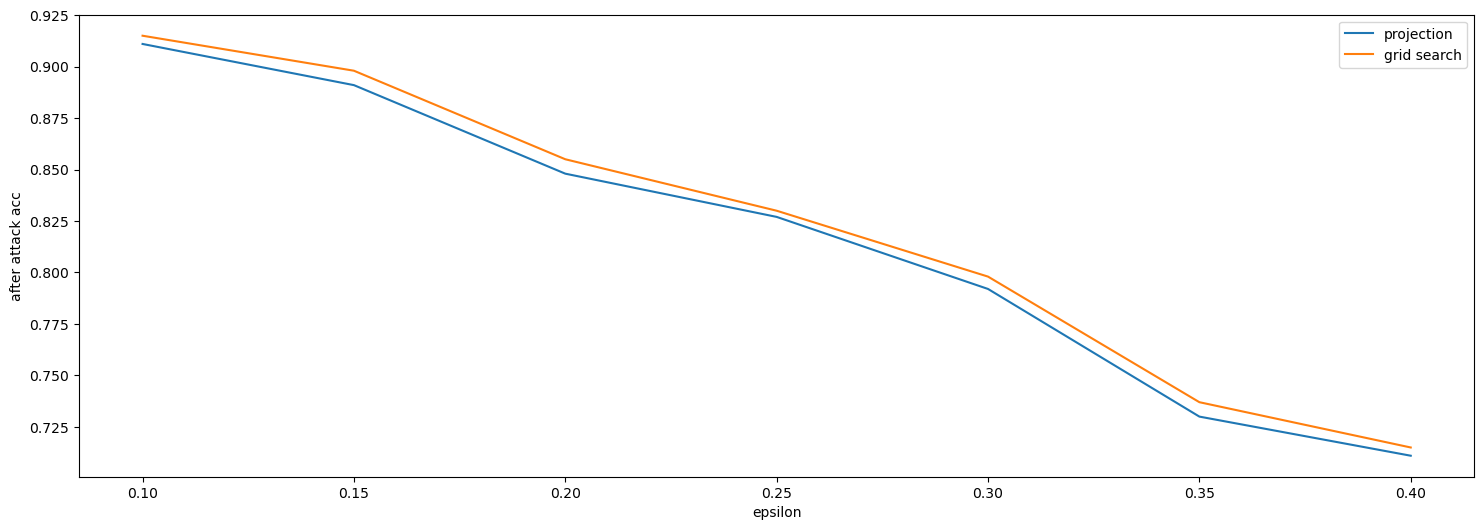}\\
\centering
\shortstack[c]{c) 3D Single \\decision boundary} &
$\;\;\;\;\;\;\;\;\;\;\;\;\;\;\;\;\;\;\;\;\;\;\;\;$\shortstack[c]{d) 3D Double \\decision boundary} 
\end{tabular}
\caption{We compare the grid search method and projection method to generate in-manifold attack data. The first row is after attack accuracy on the 2D data set. The blue line is the accuracy of the projection approach. Orange is for the grid search method. The $\epsilon$ range is smaller than the range we choose in the discussion of the main paper. This is because $\epsilon$-budget is larger than 0.05. The after-attack accuracy remains zero. The lower row is after attack accuracy on two different 3D data sets. }
    \label{fig:manifold_cmp}
\end{figure*}

In Figure~\ref{fig:manifold_cmp} we plot the after-attack accuracy of these two in-manifold attack methods. The experiments follow the same setting as the one we described in the main paper. We could observe that the grid search is slightly stronger in the 3D single boundary case and equivalent to the projection method in the rest of the cases. In the graph, the after-attack accuracy of the grid search method matches with the projection methods in the 2D case. And in the 3D case, when the $\epsilon$ is larger than $0.5$, then the grid search method achieves smaller after attack accuracy. This is due to the projection method searching the adversarial example in a smaller in-manifold ball. In other words, it hasn't fully explored the $\epsilon$ ball around the original data point. Therefore we could observe this small gap between these two methods. In the paper, we rely on the grid search method for generating in-manifold adversarial examples.

Furthermore, we compute the in-manifold risk in Table~\ref{table:2D_adv_table} and~\ref{table:3D_adv_table} using the grid search method. We plot our results in Table~\ref{table:2D_fnor_cmp} and Table~\ref{table:3D_fnor_cmp}. Since the attack performance of the grid search approach is stronger than the projection approach, the upper bound holds. In Table~\ref{table:2D_fnor_cmp} and Table~\ref{table:3D_fnor_cmp} we could observe this result. 

Comparing Table~\ref{table:2D_adv_table} and~\ref{table:2D_adv_table}, we could see that $R_{adv}^{in}$ in Table~\ref{table:2D_fnor_cmp} and Table~\ref{table:3D_fnor_cmp} has similar results. It implies that the projection method does not underestimate the upper in most cases. For the 3D double boundary dataset, the projection method has weaker results but the upper bound still holds. It implies that the upper bound in Corollary~\ref{defriskdecom} is loose in our study case. We could further prove a tighter upper bound in~\ref{riskdecom} claim $ii)$.
\setcounter{table}{2}
\begin{table*}[!ht]
\begin{center}
\caption{Computing $R^{in}_{adv}(f^{nor})$ using Grid Search methods for 2D data set}
\label{table:2D_fnor_cmp}
\begin{adjustbox}{width=0.9\columnwidth,center}
\begin{tabular}{|c|c|c|c|c|c|c|c|c|c|}
\hline
  Single Boundary & \multicolumn{2}{c|}{$f$} & \multicolumn{2}{c|}{$f^{adv}$} &\multicolumn{5}{c|}{$f^{nor}$} \\
\hline
$\epsilon$ & $R^{adv}$ & RHS & $R^{adv}$ & RHS & $R^{adv}$ & RHS & $R^{in}_{adv}(2\epsilon)$ & $R^{nor}_{adv}$ & $R_{std}$ \\
\hline
0.01 & 0.0110 & 0.0200 & 0.0110 & 0.022 & 0.0090 & 0.0180 & 0.0100 & 0.0050 & 0.0050 \\
\hline
0.02 & 0.0130 & 0.0426 & 0.0130 & 0.0425 & 0.0130 & 0.0439 & 0.0280 & 0.0060 & 0.0060 \\
\hline
0.03 & 0.0230 & 0.0499 & 0.0250 & 0.0595 & 0.0230 & 0.0613 & 0.0380 & 0.0120 & 0.0120 \\
\hline
0.05 & 0.0280 & 0.0871 & 0.0300 & 0.0881 & 0.0280 & 0.0843 & 0.0669 & 0.0040 & 0.0040 \\
\hline
0.1 & 0.0709 & 0.1974 & 0.0699 & 0.2026 & 0.0709 & 0.1620 & 0.1300 & 0.0 & 0.0040 \\
\hline
0.15 & 0.0979 & 0.2721 & 0.1009 & 0.3243 & 0.1009 & 0.3225 & 0.2209 & 0.0050 & 0.0050 \\
\hline
0.2 & 0.128 & 0.4063 & 0.126 & 0.4160 & 0.128 & 0.4206 & 0.2730 & 0.0050 & 0.0040 \\
\hline
0.25 & 0.1660 & 0.498 & 0.1630 & 0.5218 & 0.1660 & 0.5026 & 0.3299 & 0.0040 & 0.0040 \\
\hline
0.3 & 0.1979 & 0.6117 & 0.1979 & 0.6239 & 0.1979 & 0.5005 & 0.4000 & 0.0 & 0.0\\
\hline
\hline
 Double Boundary & \multicolumn{2}{c|}{$f$} & \multicolumn{2}{c|}{$f^{adv}$} &\multicolumn{5}{c|}{$f^{nor}$} \\
\hline
$\epsilon$ & $R^{adv}$ & RHS & $R^{adv}$ & RHS & $R^{adv}$ & RHS & $R^{in}_{adv}(2\epsilon)$ & $R^{nor}_{adv}$ & $R_{std}$ \\
\hline
0.01 & 0.0080 & 0.0404 & 0.0060 & 0.038 & 0.0070 & 0.0386 & 0.0290 & 0.0030 & 0.0030 \\
\hline
0.02 & 0.0240 & 0.0467 & 0.0230 & 0.0457 & 0.0240 & 0.0594 & 0.0390 & 0.0050 & 0.0050 \\
\hline
0.03 & 0.0510 & 0.1279 & 0.0460 & 0.1309 & 0.0510 & 0.1273 & 0.0839 & 0.0110 & 0.0110 \\
\hline
0.05 & 0.0620 & 0.1545 & 0.0620 & 0.1738 & 0.0629 & 0.1711 & 0.121 & 0.0080 & 0.0080 \\
\hline
0.1 & 0.1170 & 0.4037 & 0.1169 & 0.5155 & 0.12 & 0.3076 & 0.273 & 0.0060 & 0.0060 \\
\hline
0.15 & 0.1850 & 0.5649 & 0.1860 & 0.5619 & 0.1939 & 0.5768 & 0.368 & 0.0040 & 0.0040 \\
\hline
0.2 & 0.242 & 0.8709 & 0.247 & 0.82 & 0.265 & 0.88 & 0.5429 & 0.0060 & 0.0050 \\
\hline
0.25 & 0.3139 & 1. & 0.3169 & 1. & 0.3239 & 1. & 0.696 & 0.0080 & 0.0080 \\
\hline
0.3 & 0.386 & 1. & 0.379 & 1. & 0.394 & 1. & 0.833 & 0.0070 & 0.0060 \\
\hline
\end{tabular}
\end{adjustbox}
\end{center}
\end{table*}

\begin{table*}[!ht]
\begin{center}
\caption{Computing $R^{in}_{adv}(f^{nor})$ using Grid Search methods for 3D data set}
\label{table:3D_fnor_cmp}
\begin{adjustbox}{width=0.9\columnwidth,center}
\begin{tabular}{|c|c|c|c|c|c|c|c|c|c|}
\hline
 Single Boundary & \multicolumn{2}{c|}{$f$} & \multicolumn{2}{c|}{$f^{adv}$} &\multicolumn{5}{c|}{$f^{nor}$} \\
\hline
$\epsilon$ & $R^{adv}$ & RHS & $R^{adv}$ & RHS & $R^{adv}$ & RHS & $R^{in}_{adv}(2\epsilon)$ & $R^{nor}_{adv}$ & $R_{std}$ \\
\hline
0.1 & 0.0450 & 0.0974 & 0.0410 & 0.098 & 0.0470 & 0.0932 & 0.0889 & 0.0050 & 0.0050 \\
\hline
0.2 & 0.1139 & 0.2062 & 0.0999 & 0.2201 & 0.1099 & 0.2093 & 0.1879 & 0.0100 & 0.0199 \\
\hline
0.3 & 0.1550 & 0.3957 & 0.136 & 0.3557 & 0.1540 & 0.3482 & 0.3020  & 0.0080 & 0.0265 \\
\hline
0.4 & 0.2089 & 0.5124 & 0.1680 & 0.5008 & 0.2059 & 0.4729 & 0.375 & 0.0080 & 0.0193 \\
\hline
0.5 & 0.247 & 0.6057 & 0.1860 & 0.5405 & 0.250 & 0.6354 & 0.477 & 0.0040 & 0.0174 \\
\hline
0.6 & 0.2700 & 0.8444 & 0.2179 & 0.6828 & 0.257 & 0.7169 & 0.5569 & 0.0090 & 0.0153\\
\hline
0.7 & 0.2600 & 1. & 0.2009 & 0.8673 & 0.2731 & 0.8004 & 0.651 & 0.0030 & 0.0139 \\
\hline
0.8 & 0.2329 & 1. & 0.1670 & 1.  & 0.2339 & 0.8774 & 0.702 & 0.0020 & 0.0129\\
\hline
\hline
Double Boundary & \multicolumn{2}{c|}{$f$} & \multicolumn{2}{c|}{$f^{adv}$} & \multicolumn{5}{c|}{$f^{nor}$} \\
\hline
$\epsilon$ & $R^{adv}$ & RHS & $R^{adv}$ & RHS & $R^{adv}$ & RHS & $R^{in}_{adv}(2\epsilon)$ & $R^{nor}_{adv}$ & $R_{std}$ \\
\hline 
0.1 & 0.0649 & 0.1688 & 0.0789 & 0.1517 & 0.0759 & 0.1624 & 0.1510 & 0.0130 & 0.0140 \\
\hline 
0.15 & 0.1460 & 0.2581 & 0.1280 & 0.228 & 0.1510 & 0.2405 & 0.2099 & 0.0220 & 0.0270 \\
\hline 
0.2 & 0.1700 & 0.3476 & 0.1370 & 0.3174 & 0.1670 & 0.3441 & 0.2940 & 0.0170 & 0.0170\\
\hline 
0.25 & 0.2049 & 0.4700 & 0.1500 & 0.4300 & 0.2099 & 0.4576 & 0.37 & 0.0210 & 0.0210 \\
\hline 
0.3 & 0.2159 & 0.5745 & 0.1810 & 0.5240 & 0.2119 & 0.5331 & 0.4170 & 0.0190 & 0.0190\\
\hline 
0.35 & 0.275 & 0.5756 & 0.2039 & 0.5469 & 0.2750 & 0.555 & 0.4280 & 0.0110 & 0.0130\\
\hline 
0.4 & 0.3000 & 0.76 & 0.2069 & 0.7255 & 0.3040 & 0.8133 & 0.523 & 0.0520 & 0.0080\\
\hline
\end{tabular}
\end{adjustbox}
\end{center}
\end{table*}

\subsection{Decision Boundary Discussion}
In this section, we explain one of our intuitions of deriving this decomposing. In geometry, we know that if the decision boundary of the classifier is perpendicular to the manifold, then along normal direction, it is hard to find an adversarial example that can successfully attack the model. Therefore, the general adversarial risk is owing to tangential or in-manifold direction perturbation. Under this setting classifiers with decision boundary perpendicular to the manifold in ambient space would have $R^{nor}_{adv}$ equal zero. And this gives us claim $ii)$ in Theorem~\ref{riskdecom}. In the following section, we will plot the classifier's decision boundary in ambient space to state that our intuition holds on the synthetic data set.

\subsubsection{2D Decision Boundary}
In the 2D synthetic data set, we plot multiple decision boundaries of $f^{nor}$ in the double decision boundary case. As we increase the $\epsilon$ budget in the defense algorithm (Algorithm 1 in the paper), the decision boundary becomes more perpendicular to the unit circle. And it matches the results for $R^{nor}_{adv}(f^{nor})$ in Table 1. Around $\epsilon = 0.1$, $R^{nor}_{adv}(f^{nor})$ achieves the minimum value. And we could observe that the shape of the decision boundary is perpendicular and matches with the true label.
\setcounter{figure}{3} 
\begin{figure*}
  \centering
\begin{tabular}{p{5cm} p{5cm} p{5cm}}
\includegraphics[width=4.5cm]{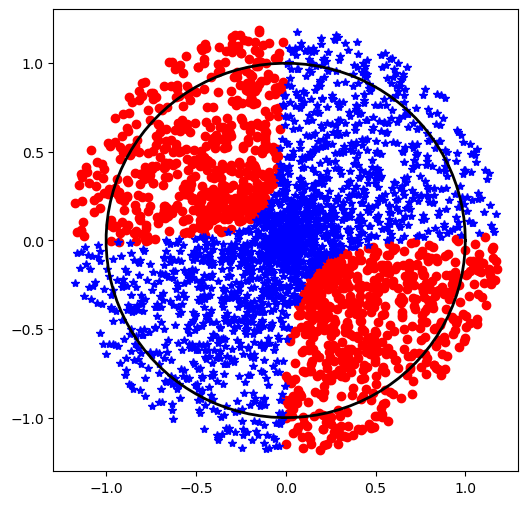}&
\includegraphics[width=4.5cm]{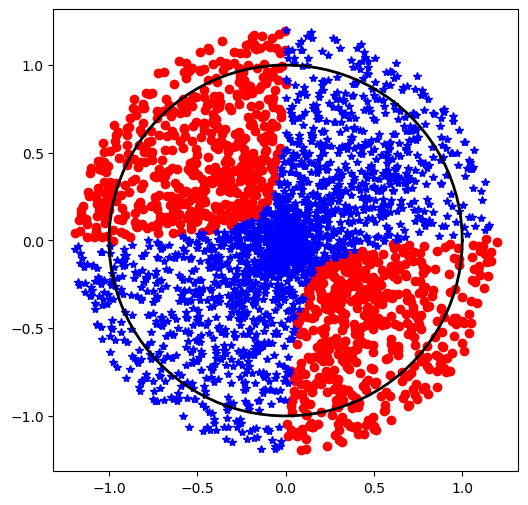}& 
\includegraphics[width=4.5cm]{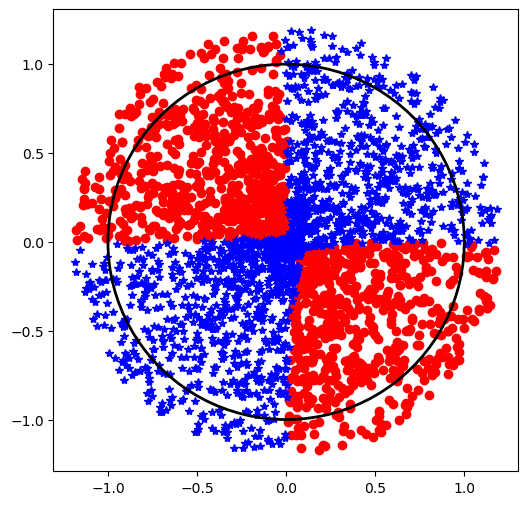}\\
  \centering
\shortstack{a)  $\epsilon = 0.02$ } &
 \centering \shortstack{b)  $\epsilon = 0.03$} &
  \centering \shortstack{c)  $\epsilon = 0.15$} 
\end{tabular}
\caption{In the graph, we first sample 3000 points in whole space and use $f^{nor}$ to classifier these sampled points. We use red dots and blue dots to mark two different classes. And the decision boundary of the classifier is easy to see in this setting. From left to right, we increase the $\epsilon$ budget from 0.02 to 0.15 and use the corresponding normal adversarial data to train the $f^{nor}$. The decision boundary of $f^{nor}$ is correlated with the size of $\epsilon$.}
    \label{fig:2D_bd}
\end{figure*}

\subsubsection{3D Decision Boundary}
In 3D cases, we plot the projection of points in ambient space back to the data manifold $x_1x_2$-plane. If the decision boundary is fully perpendicular to the $x_1x_2$-plane, the projection would have a clear separation and matches with the $x_2 = sin(x_1)$ boundary in the manifold. If not, we will have a region close to $x_2 = sin(x_1)$ with mixing red and blue points or the projection does not match with the in-manifold separation. 

We show the results in Figure~\ref{fig:3D_bd}. In the single decision boundary case, only $f^{nor}$ has (nearly) perpendicular decision boundary. 
For $f$, we can observe that the red points step into the region of the blue points and so does the blue points. And the adversarial training classifier $f^{adv}$ has an even worse result, its decision boundary does not fully match with the $x_2 = sin(x_1)$ curve inside the manifold, which implies that the classifier does not have good standard accuracy, which implies the trade-off between robustness and accuracy for the general robust classifier. And the same results and conclusions hold for the double boundary case. 
\begin{figure*}
  \centering
\begin{tabular}{p{5cm} p{5cm} p{5cm}}
\includegraphics[width=4.5cm]{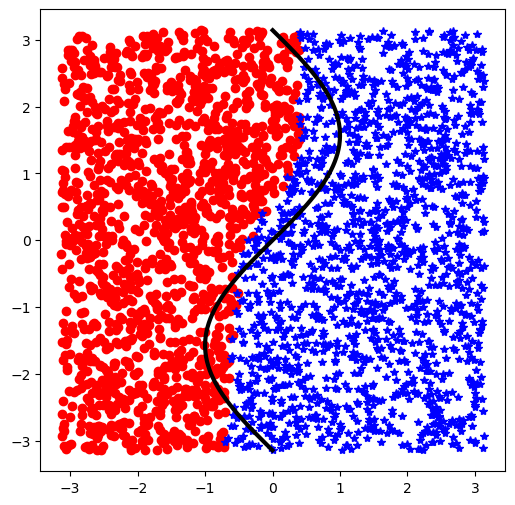}&
\includegraphics[width=4.5cm]{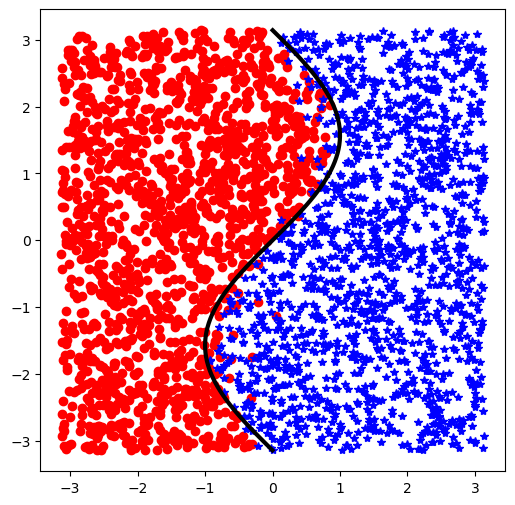}& 
\includegraphics[width=4.5cm]{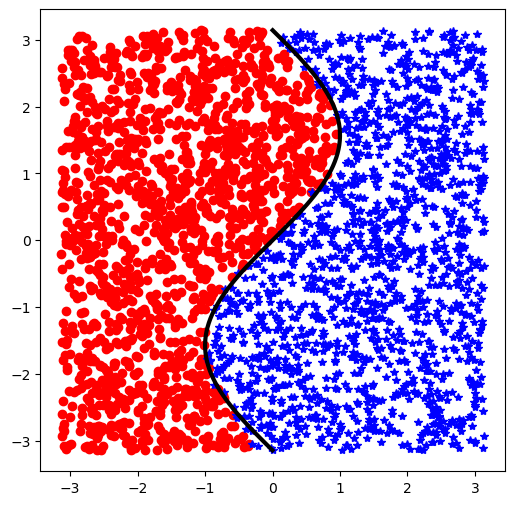}\\
\shortstack{a) Decision Boundary of $f^{adv}$ } &
 \shortstack{b)  Decision Boundary of $f$ } &
  \shortstack{c)  Decision Boundary of $f^{nor}$} \\
  \centering
  \includegraphics[width=4.5cm]{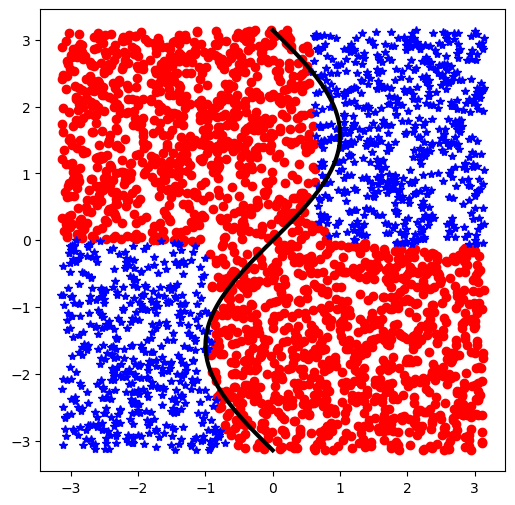}&
\includegraphics[width=4.5cm]{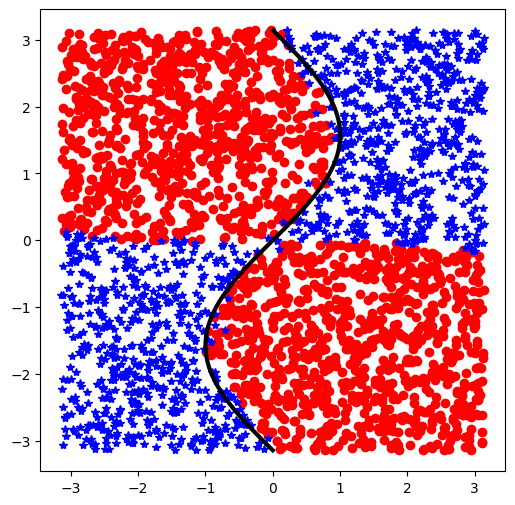}& 
\includegraphics[width=4.5cm]{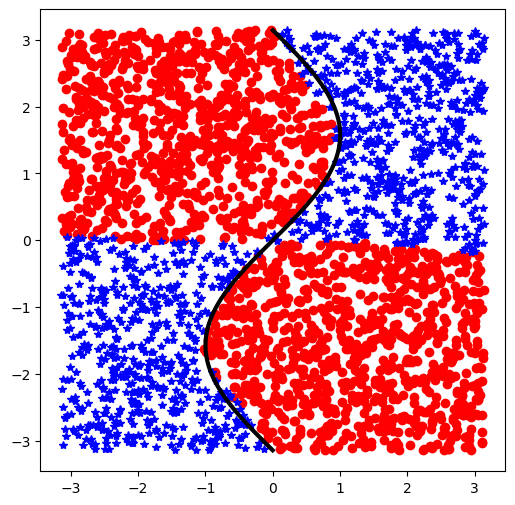} \\
\shortstack{a) Decision Boundary of $f^{adv}$} &
 \shortstack{b)  Decision Boundary of $f$  } &
  \shortstack{c)  Decision Boundary of $f^{nor}$ } \\
  
\end{tabular}
\caption{In this graph we show the projection of data classified with $f$, $f^{adv}$ and $f^{nor}$. We sample 5000 points in the tubular space of $x_1x_2$-plane. And use $f$, $f^{adv}$ and $f^{nor}$ to classify these points and mark with red and blue dots. If the point is in the ambient space, we project them back to the $x_1x_2$-plane. The first row are $f$, $f^{adv}$ and $f^{nor}$ trained with 3D single boundary synthetic data set. The second row is classifiers trained with the double boundary synthetic data set.}
    \label{fig:3D_bd}
\end{figure*}
\end{appendices}

\end{document}